\pgfplotsset{compat=1.18}   
\newcommand{\scalebarimg}[4]{%
    \begin{tikzpicture}[every node/.style={inner sep=0,outer sep=0}]%
        \draw node[name=micrograph] {\includegraphics[width=#2]{#1}};
        \draw (micrograph.north west) node[anchor=north west,yshift=-1,#4]{\small{#3}};
    \end{tikzpicture}%
}
\let\NAT@parse\undefined
\newlength{\mycaption}
\newlength{\mySepBetweenFigAndCap}
\newlength{\mySepBetweenTopAndFig}
\newtheorem{theorem}{Theorem}
\theoremstyle{definition}
\title{\LARGE \bf
Enhanced Robotic Navigation in Deformable Environments using Learning from Demonstration and Dynamic Modulation
}
\author{%
    Lingyun Chen$^{1*}$, 
    Xinrui Zhao$^{1*}$, 
    Marcos P. S. Campanha$^{1}$,
    Alexander Wegener$^{1}$,
    Abdeldjallil Naceri$^{1}$,\\ 
    Abdalla Swikir$^{2}$ and 
    Sami Haddadin$^{2}$
\thanks{This work was supported by the German Research Foundation (DFG) as part of Germany’s Excellence Strategy, EXC 2050/1, Project ID 390696704 – Cluster of Excellence ``Centre for Tactile Internet with Human-in-the-Loop'' (CeTI) of Technische Universität Dresden.}%
\thanks{$^{1}$Munich Institute of Robotics and Machine Intelligence (MIRMI), Technical University of Munich (TUM), Germany. Corresponding author:
{\tt\small lingyun.chen@tum.de}}%
\thanks{$^{2}$Mohamed Bin Zayed University of Artificial Intelligence, Abu Dhabi, UAE}
\thanks{$^{*}$The first two authors contributed equally to this work.}
}
\begin{document}
	
\maketitle
\thispagestyle{plain}
\pagestyle{plain}

\setstretch{0.93}

\begin{abstract}

This paper presents a novel approach for robot navigation in environments containing deformable obstacles. 
By integrating Learning from Demonstration (LfD) with Dynamical Systems (DS), we enable adaptive and efficient navigation in complex environments where obstacles consist of both soft and hard regions. 
We introduce a dynamic modulation matrix within the DS framework, allowing the system to distinguish between traversable soft regions and impassable hard areas in real-time, ensuring safe and flexible trajectory planning.
We validate our method through extensive simulations and robot experiments, demonstrating its ability to navigate deformable environments. Additionally, the approach provides control over both trajectory and velocity when interacting with deformable objects, including at intersections, while maintaining adherence to the original DS trajectory and dynamically adapting to obstacles for smooth and reliable navigation. 
%


\end{abstract}

\section{Introduction}

Navigating complex environments remains a key challenge in robotics, particularly in scenarios requiring enhanced decision-making and adaptability. While most current research emphasizes obstacle avoidance by treating all obstacles as rigid entities to be avoided entirely \cite{kunchev2006path, fox1997dynamic, lamiraux2004motion}, relatively little attention has been given to environments containing deformable regions that could be incorporated into the robot's path planning. This gap in existing approaches leads to inefficient navigation, especially in industrial, domestic, and healthcare applications where the interaction with soft objects is crucial for effective operation \cite{Li_2024, Zhu2023SafeContact}.

\begin{figure}[!t]%
    \vspace*{0.2em}%
    \centering%
    \includegraphics[width=0.73\columnwidth]{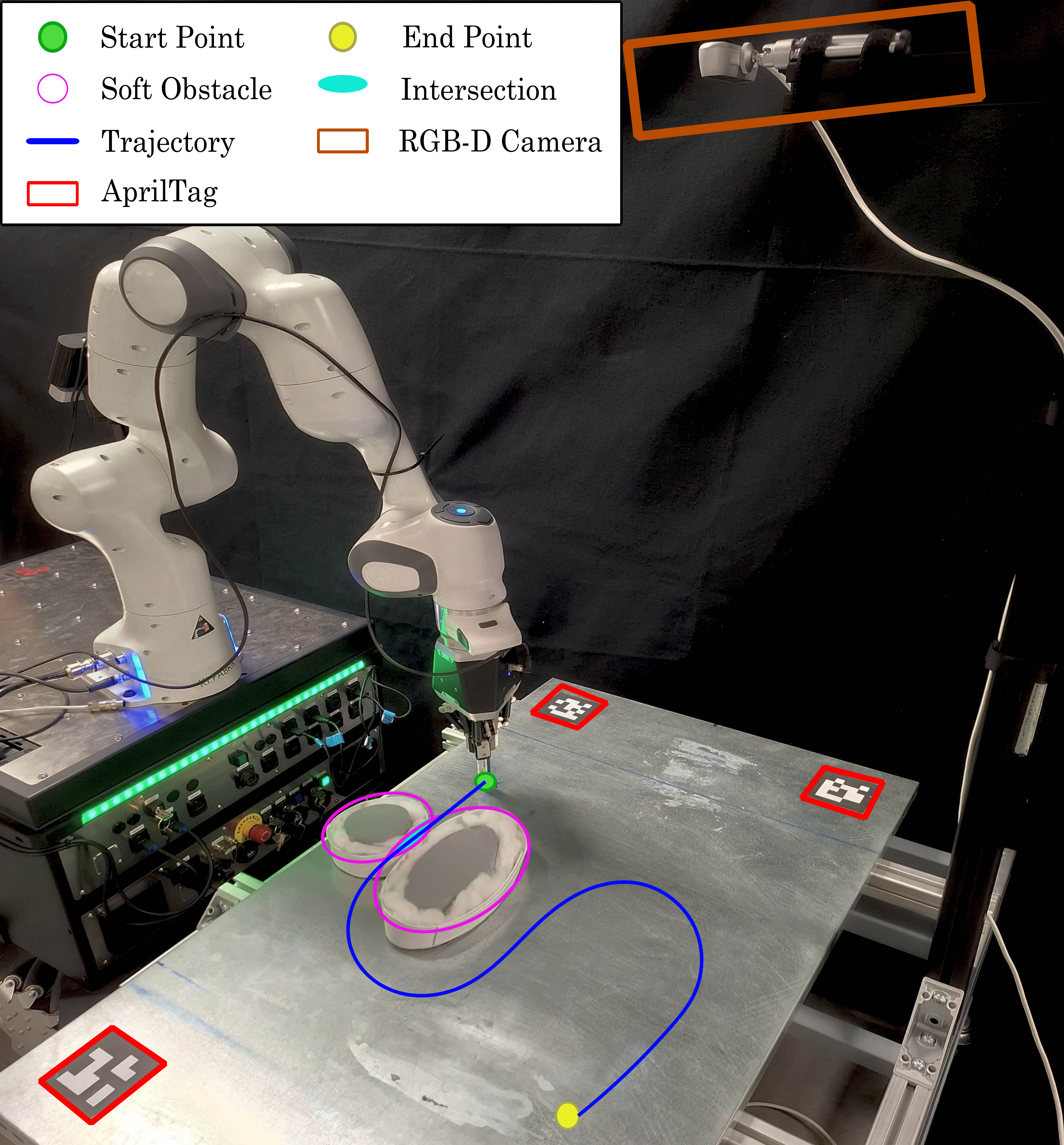}%
    \vspace*{\mySepBetweenFigAndCap}%
    \caption{
    Experimental setup showing the robot's navigation from the start point (green circle) along the planned trajectory (blue line) toward the attractor (yellow circle). The robot interacts at the intersection (cyan ellipse) of two deformable obstacles (pink ellipses). An RGB-D camera (orange box) provides continuous perception of obstacle poses and properties.
    }
    \label{fig:physical_obs1}%
    \vspace*{\mycaption + \mySepBetweenFigAndCap}%
    \vspace{-0.65em}
\end{figure}

Learning from Demonstration (LfD) enables robots to learn tasks through demonstrations, allowing flexible adaptation without explicit programming \cite{atkeson1997robot, Calinon2018, zhu2018robot}. This approach is especially valuable for tasks challenging to define through conventional programming or where demonstrating the desired outcome is more efficient. In LfD, trajectories are commonly encoded using Dynamical Systems (DS) to ensure stability and consistency throughout the task execution \cite{khansari2010imitation}. Position and velocity data from demonstrations are optimized to create a stable DS that reliably reaches the target, regardless of initial conditions or disturbances \cite{figueroa2018APhysically-ConsistentBayesian, khansari2011LearningStableNon-Linear}. However, current methods focus on rigid object avoidance, overlooking deformable ones \cite{khansari2012Adynamicalsystemapproach, huber2019avoidance}. 


Traditional obstacle avoidance methods, such as Artificial Potential Fields (APF) \cite{khatib1986realtime, barraquand1992numerical}, are limited by static models and the problem of local minima. While techniques like Harmonic Potential Functions (HPF) avoid these issues by modeling fluid dynamics \cite{kim1992real, feder1997real}, they still lack the capability to differentiate between deformable and rigid obstacles. Moreover, LfD and DS offer promising frameworks for robotic navigation, but they require adaptations to effectively handle environments containing such obstacles.

To address these limitations and build upon the existing dynamic modulation method \cite{khansari2012Adynamicalsystemapproach}, we present a novel method that integrates LfD with DS using an enhanced dynamic modulation matrix to continuously adjust the robot’s trajectory based on the stiffness properties of obstacles. 



In summary, the contributions of this paper are as follows:

\begin{enumerate}
  \item We integrate LfD with DS, leveraging real-time perception to estimate the deformable obstacles' pose and properties. An enhanced dynamic modulation matrix is employed to distinguish between soft and hard regions, enabling the robot to dynamically adapt its trajectory by traversing soft regions while avoiding hard ones.
  \item We demonstrate the effectiveness of our approach through extensive simulations and robot experiments illustrated in Fig. \ref{fig:physical_obs1}, showing capabilities of navigating in deformable environments and benefits in providing control over trajectory and velocity when interacting with deformable objects, including at intersections, compared to existing obstacle avoidance algorithms. The source code for this research is
publicly accessible. \footnote{%
Available at \href{https://gitlab.com/ds_experiment/ds_obstacle}
{https://gitlab.com/ds\_experiment/ds\_obstacle}}
\end{enumerate}

\section{Preliminaries}\label{sec:preliminaries}
\subsection{Stable Dynamical System}

A stable DS with an equilibrium point in  $\bm{\xi}^\star \in \mathbb{R}^d$ can be expressed as an optimization problem with the dynamics function $f :  \mathbb{R}^d \to  \mathbb{R}^d$  and the Lyapunov function $V :  \mathbb{R}^d \to  \mathbb{R}$ as the variables of the problem. Given reference data in the form of N tuples of positions and velocity, $(\mathrm{\bm{\xi}}_{ref}^{(i)} , \dot{\mathrm{\bm{\xi}}}_{ref}^{(i)}) , i = 1,...,\mathit{N}$, one needs to solve the following the optimization problem (cf.\cite{khansari2011LearningStableNon-Linear}):

\begin{align}
&\min_{f}\sum_{i=1}^{N} \left\|\dot{\mathrm{\bm{\xi}}}_{\mathrm{ref}}^{(i)} - f(\mathrm{\bm{\xi}}_{\mathrm{ref}}^{(i)})\right\|  \\
&\text{s.t. } f(\bm{\xi}^\star)=0   \notag\\
&V(\bm{\xi}^\star)=0, V(\bm{\xi})>0, \text{ for all } \bm{\xi} \in \mathbb{R}^d \setminus \{\bm{\xi}^\star\}  \notag\\
&\dot{V}(\bm{\xi}^\star)=0, \dot{V}(\bm{\xi})<0, \text{ for all } \bm{\xi} \in \mathbb{R}^d \setminus \{\bm{\xi}^\star\}, \notag
\end{align}
where $V(\bm{\xi})$ is the corresponding Lyapunov function \cite{khalil2002nonlinear}. The approach using a Linear Parameter Varying (LPV) framework, combined with a Gaussian Mixture Model (GMM)-based mixing function, provides a formulation for the nonlinear DS $f$:
\begin{align}
\dot{\bm{\xi}}=f(\bm{\xi})=\sum_{k=1}^{K}\gamma_k(\bm{\xi})(\bm{A}_k\bm{\xi}+\bm{b}_k),
\label{eq:DS_formel}
\end{align}
learned from demonstrations in a decoupled manner, where $\bm{A}_k \in \mathbb{R}^{d\times d}$ and $\bm{b}_k \in \mathbb{R}^{d}$. The GMM parameters $\gamma_k : \mathbb{R}^d \to \mathbb{R}$ are defined as \cite{figueroa2018APhysically-ConsistentBayesian} :
\begin{align}
\gamma_k(\bm{\xi}) = \frac{\pi_k \, p(\bm{\xi}|k)}{\sum_j \pi_j \, p(\bm{\xi}|j)},
\end{align}
with ${\pi_k\ge 0}$ being the mixing weights satisfying ${\sum_{k=1}^{K} \pi_k = 1}$, and %
$p(\bm{\xi}|k), k = 1,...,K,$ being the probability density function of a multivariate normal distribution.
The linear parameters $\theta_\gamma = \left\{ (\bm{A}_k, \bm{b}_k) \right\}_{k=1}^{K}$ are estimated via semidefinite optimization problem that ensures global asymptotic stability of the system. Therefore, the linear parameters are derived from Parametrized Quadratic Lyapunov Function (P-QLF):
\begin{align}
 V(\bm{\xi}) &= (\bm{\xi}-\bm{\xi}^\star)^{T}\bm{P}(\bm{\xi}-\bm{\xi}^\star),\\
 \bm{P} &= \bm{P}^{T} > 0.
 \end{align}
 where $\bm{P}$ signifies a positive-definite matrix, ensuring that $V(\bm{\xi})$ exhibits a strictly convex nature, which is a basis for confirming the system's stability.
 

\subsection{Obstacle Avoidance Mechanism}

Following the method in \cite{khansari2012Adynamicalsystemapproach}, a continuous distance function is introduced as $\Gamma(\bm{\xi}) : \mathbb{R}^d \setminus X^r \rightarrow \mathbb{R}$ for each obstacle, where \(X^r \subset \mathbb{R}^d\) represents a region. This function is important for delineating three distinct regions near an obstacle, thereby facilitating the classification of points based on their spatial relation to it. The regions are defined as follows:
\begin{align}
&\text{Exterior Region: } &X^e &= \left\{\bm{\xi} \in \mathbb{R}^d | \Gamma(\bm{\xi}) > 1\right\}, \notag\\
&\text{Boundary Region: } &X^b &= \left\{\bm{\xi} \in \mathbb{R}^d | \Gamma(\bm{\xi}) = 1\right\}, \notag\\
&\text{Interior Region: } &X^i &= \left\{\bm{\xi} \in \mathbb{R}^d \setminus (X^e \cup X^b)\right\}.
\end{align}
The function $\Gamma(\cdot)$ is specifically engineered to increase monotonically as the distance from the obstacle's center $\bm{\xi}^c$ expands, further characterized by a continuous first-order partial derivative ($C^1$ smoothness). In this work, $\Gamma(\bm{\xi})$ is defined as:
\begin{align}
\Gamma(\bm{\xi}) = \sum_{i=1}^{d} \left(\frac{\left\|\bm{\xi}_i - \bm{\xi}_i^{c}\right\|}{  {R}(\bm{\xi})}\right)^{2p},
\quad  p \in \mathbb{N}^+,
\label{eq:Gamma_original}
\end{align}    
where $R(\bm{\xi})$ represents the distance from a center point $\bm{\xi}^c$ within the obstacle to the boundary defined by $\Gamma(\bm{\xi})=1$, aligned in the direction $r(\bm{\xi})$. This formulation indicates the obstacle avoidance strategy by enabling a clear distinction of navigable space concerning the encountered obstacles.

\section{Soft Obstacle Avoidance Algorithm}
Real-time obstacle avoidance is facilitated by applying a dynamic modulation matrix to the predefined DS, as delineated in Eq~\eqref{eq:DS_formel}. The modified system dynamics are represented by\cite{khansari2012Adynamicalsystemapproach}:
\begin{align}
\dot{\bm{\mathrm{\bm{\xi}}}} = \bm{M}(\mathrm{\bm{\xi}})f(\mathrm{\bm{\xi}}) \!\quad\! \text{with} \!\quad\! \bm{M}(\bm{\mathrm{\bm{\xi}}}) = \bm{E}(\mathrm{\bm{\xi}})\bm{D}(\bm{\xi})\bm{E}(\bm{\xi})^{-1}.
\label{eq:avoidance_main}
\end{align}
Here, $\bm{M}(\mathrm{\bm{\xi}})$ is the local modulation matrix, which offers substantial benefits by keeping existing extrema, such as attractors, and excluding the emergence of new extrema,  assuming that \(\bm{M}(\bm{\xi})\) maintains full rank. The utilization of the modulation matrix provides a closed-form solution, ensuring that the matrix's application to the system remains computationally efficient.
\begin{figure}[h]
    \vspace*{\mySepBetweenTopAndFig}%
    \vspace{-3mm}
    \centering
    \subfloat[]{\label{fig:obs.1}\scalebarimg{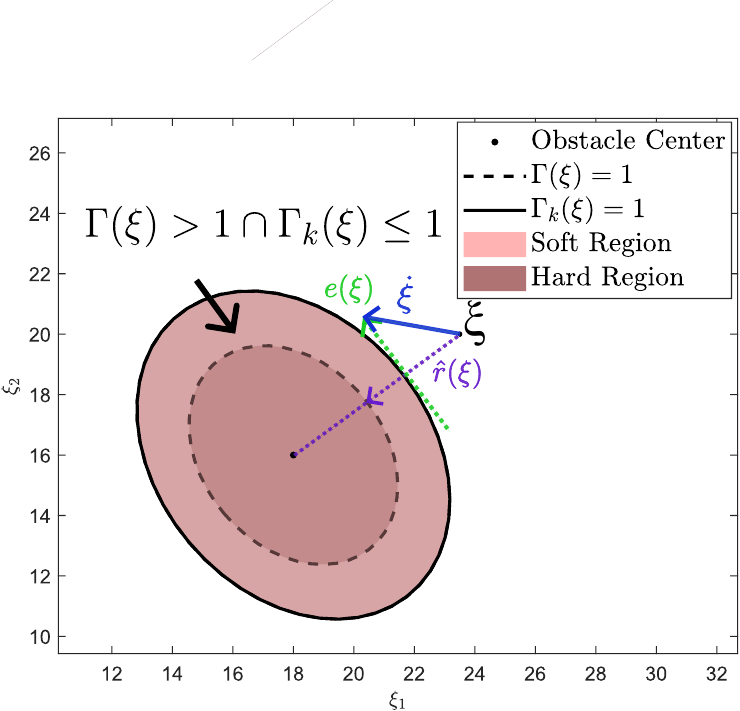}{0.48\linewidth}{}{black}}%
    \hspace{\fill}%
    \subfloat[]{\label{fig:obs.2}\scalebarimg{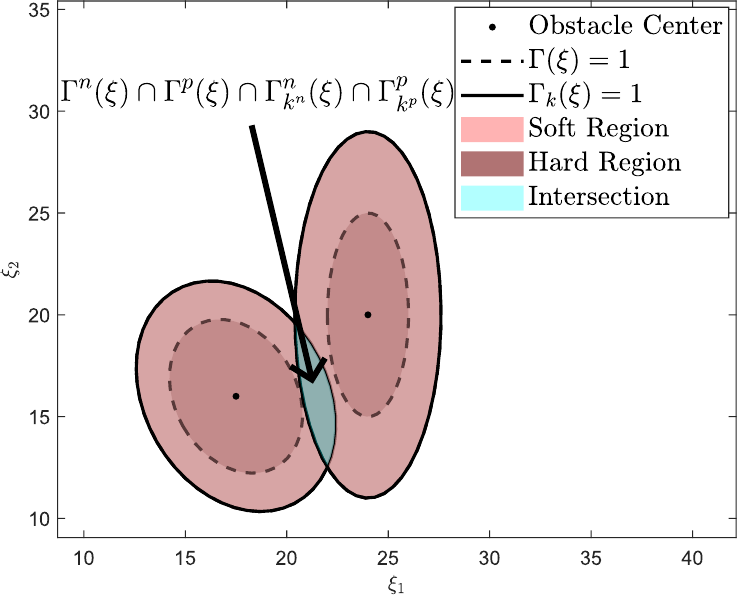}{0.48\linewidth}{}{black}}%
    \hspace{\fill}%
    
    \caption{(a) The deformable obstacle is defined as comprising soft and hard regions., represented by the boundaries \(\Gamma(\bm{\xi})=1\) and \(\Gamma_{k}(\bm{\xi})=1\), and by the regions constrained by \( \Gamma(\bm{\xi}) > 1 \cap \Gamma_{k}(\bm{\xi}) \leq 1 \). Stretching along the tangent vectors ${\bm{e}}(\bm{\xi})$ and compressing in the reference direction $\hat{\bm{r}}(\bm{\xi})$ allow the avoidance of hard regions with \(\dot{\bm{\xi}}\). (b) For intersecting obstacles, overlapping soft regions are represented by the distance functions.}
    \label{fig:obs_single_intersecting}
    \vspace*{\mySepBetweenFigAndCap}%
\end{figure}

The modulation matrix \(\bm{M}(\bm{\xi})\) is composed of a basis matrix:
\begin{align}
\!\!\!\bm{E}(\bm{\xi}) \!=\! \left[\hat{\bm{r}}(\bm{\xi}) \quad \bm{e}_1 (\bm{\xi}) \!\!\!\quad\!\! \ldots \!\!\quad\!\! \bm{e}_{d-1} (\bm{\xi})\right],
\hat{\bm{r}}(\bm{\xi}) \!=\! \frac{\bm{\xi} - \bm{\xi}^r}{\|\bm{\xi} - \bm{\xi}^r\|} ,\!\!
\label{eq:avoidance_matrixE}
\end{align} 
where $\hat{\bm{r}}(\bm{\xi})$ is the unit vector in the reference direction, and the tangents $\bm{e}_{(\cdot)}(\bm{\xi})$ form a $(d - 1)$-dimensional orthonormal basis to the gradient of the distance function $\frac{d\Gamma(\bm{\xi})}{d\bm{\xi}}$ (Fig. \ref{fig:obs.1}). The reference direction $\bm{r}(\bm{\xi})$ is based on a reference point inside the obstacle. 

The associated eigenvalue matrix $\bm{D}(\bm{\xi})$ stretches and compresses the dynamics along the directions defined by the $\bm{r}-\bm{e}-$ system. This diagonal matrix enables individual modification of the reference $\bm{r}(\bm{\xi})$ and tangent $\bm{e}_{(\cdot)}(\bm{\xi})$ directions:
\begin{align}
\bm{D}(\bm{\xi}) = \text{diag} (\lambda_r (\bm{\xi}), \lambda_{e_1} (\bm{\xi}), \ldots, \lambda_{e_{d-1}} (\bm{\xi}))
\label{eq:avoidance_matrixD},
\end{align} 
where the eigenvalues $\lambda_{(\cdot)}(\bm{\xi})$ determine the degree of stretching in each respective direction.

\subsection{Eigenvalues}
The eigenvalue associated to the ﬁrst eigenvector decreases to zero on the obstacle’s hull, cancelling the ﬂow in the direction of the obstacle. In obstacle avoidance systems, modifying the flow of movement near obstacles is crucial for ensuring safe navigation. By adjusting the eigenvalues associated with different directions, these systems can control the robot's motion, preventing collisions while maintaining efficient movement around obstacles \cite{khansari2012Adynamicalsystemapproach}.
We have:
\begin{align}
    0 \leq \lambda_r(\bm{\xi}) \leq 1, \quad \lambda_e(\bm{\xi}) \geq 1, \nonumber\\
    \lambda_r(\bm{\xi} \mid \Gamma(\bm{\xi}) = 1) = 0, \nonumber\\  \underset{\Gamma(\bm{\xi})}{\text{argmax}}\, \lambda_e(\bm{\xi}) = 1,  \lim_{\Gamma(\bm{\xi}) \rightarrow \infty} \lambda_{(\cdot)}(\bm{\xi}) = 1 
    \label{eq:avoidance_eigenvalues1}
\end{align}
Using the distance function $\Gamma(\bm{\xi})> 0$, which measures the distance to the obstacle's surface:

\begin{align}
    \!\!\lambda_r(\bm{\xi}) := 1 - \frac{1}{\Gamma(\bm{\xi})}, \!\!\quad\!\! \lambda_e(\bm{\xi}) := 1 + \frac{1}{\Gamma(\bm{\xi})}.
    \label{eq:avoidance_eigenvalues2}
\end{align}

\subsection{Stiffness Coefficient}

A deformable obstacle comprises hard and soft regions (as shown in Fig. \ref{fig:obs.1}), with a stiffness coefficient \( k \). This coefficient depends on the geometric properties of the obstacle, specifically the axis lengths \( a, b \in \mathbb{R} \) corresponding to the hard and soft regions, respectively. It is defined by the following expression:

\begin{align}
k &= \exp\left(\frac{b}{a} - 1\right), \quad \text{where} \quad \frac{b}{a} > 1, \quad a \neq 0.
\label{eq:stiffness_coe}
\end{align}

The stiffness coefficient \( k \) quantitatively characterizes the rigidity or compliance of obstacles, where higher values of 
\( k \) represents more compliant (softer) regions, and lower values indicate more rigid (harder) regions. This distinction is essential for enabling the robot to adapt its trajectory according to the varying stiffness properties of the environment. The assumption $\frac{b}{a} > 1$ is physically justified as it ensures that the stiffness coefficient 
\( k \)  remains within a range that preserves system stability, thereby preventing numerical instability and unrealistic behavior.

To further describe the influence of the deformable obstacles on the robot's motion, we introduce the function $\Gamma_k(\bm{\xi}) : \mathbb{R}^d \setminus X^r \rightarrow \mathbb{R}$, which is parameterized by the stiffness coefficient \( k \) and determines the obstacle's impact on the trajectory modification:

\begin{align}
\!\!\Gamma_{k}(\bm{\xi}) \!= \!\sum_{i=1}^{d} \left(\frac{\left\|\bm{\xi}_i - \bm{\xi}_i^{c}\right\|}{(\ln(k) + 1) \, R(\bm{\xi})}\right)^{2p}, \!\!\!\quad i = 1, 2, \ldots, d,
\label{eq:Gamma_stiffness}
\end{align}
where \( p \in \mathbb{N}^+ \) is a positive integer, \( \bm{\xi}_i \) represents the position vector component along the \( i \)-th dimension, \( \bm{\xi}_i^{c} \) denotes the center of the obstacle along that dimension.

This formulation allows the robot to adaptively adjust its trajectory by accounting for the varying resistance imposed by different obstacle regions. Regions with higher stiffness coefficients (softer regions) exert less influence on the trajectory, enabling more efficient navigation in complex environments with obstacles possessing various stiffness properties.

\subsection{Adaptive Motion Strategy}

Designing trajectories that prioritize navigation through softer regions of  obstacles is beneficial, as it minimizes the overall cost of navigation in such environments. Soft regions are defined where \( \Gamma^n(\bm{\xi}) > 1 \) and \( 0 < \Gamma_{k^n}^n(\bm{\xi}) \leq 1 \). Here, \( n \) represents the number of objects, with \( N \) obstacles in total. The proposed approach dynamically adapts the robot's trajectory by considering interactions with \( N \) obstacles while ensuring convergence to the desired target.
The updated expression for the velocity vector \( \dot{\bm{\xi}}' \), which accounts for these interactions, is given by:

\begin{align}
\dot{\bm{\xi}}' = \dot{\bm{\xi}} + \sum_{n=1}^{N} \left( \operatorname{S}(\theta_2, \theta_1^n) \, \bm{R}(\theta_2) \, \hat{\bm{r}}^n(\bm{\xi}) \, c \left( \frac{1}{\Gamma_{k^n}^n(\bm{\xi})} \right)^2 \right),
\label{eq:speedup_velocity}
\end{align}
where \( \operatorname{S}(\theta_2, \theta_1^n) \) is a sign function defined as:

\[
\operatorname{S}(\theta_2, \theta_1^n) = \operatorname{sgn}(\cos(\theta_2 - \theta_1^n)),
\]


In this formulation, $\theta_1^n$ denotes the rotation angle of the obstacle’s local frame, while $\theta_2$ is the desired rotation angle representing the deviation of the robot’s trajectory from the reference direction \( \hat{\bm{r}}^n(\bm{\xi}) \). This reference direction is defined as the unit vector, relative to a center point of the obstacle, and serves to guide the trajectory while avoiding unnecessary detours through soft regions. The rotation matrix $\bm{R}(\theta_2)$ is used to align the modulation with the desired trajectory direction. The modulation factor \( c \) can modify the velocity based on the robot's motion limitation.

\begin{theorem}
Consider a \( d \)-dimensional static deformable hyper-sphere obstacle in \( \mathbb{R}^d \) with center \( \boldsymbol{\bm{\xi}}^c \), stiffness coefficient \( k \), and radius \( r_o \). The deformable obstacle boundary consists of the hyper-surface \( X^{br} \subset \mathbb{R}^d \) defined as:
\[
X^{br} = \left\{ \boldsymbol{\bm{\xi}} \in \mathbb{R}^d : \left\|
{\boldsymbol{\bm{\xi}}} - \boldsymbol{\bm{\xi}}^c\right\| = (\ln(k) + 1) \, r_o \right\}.
\]    
Any motion \( \left\{ \boldsymbol{\bm{\xi}}_t \right\}_{t=0}^\infty \) that starts outside the obstacle's hard region and evolves according to Eq.~\eqref{eq:avoidance_main} enters the obstacle's soft region.
\end{theorem}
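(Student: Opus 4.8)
The plan is to collapse the $d$-dimensional modulated flow onto a single scalar — the Euclidean distance $\rho(t):=\|\bm{\xi}_t-\bm{\xi}^c\|$ to the obstacle center — and to show that $\rho$ is driven below the soft-region outer radius in finite time. Write $\beta:=\ln(k)+1$; since \eqref{eq:stiffness_coe} gives $k=\exp(b/a-1)$ with $b/a>1$, we have $k>1$ and hence $\beta>1$. Comparing \eqref{eq:Gamma_original} with \eqref{eq:Gamma_stiffness} yields the identity $\Gamma_k(\bm{\xi})=\Gamma(\bm{\xi})/\beta^{2p}$, so the hard region is $\{\Gamma\le1\}$, the soft region is the shell $\{1<\Gamma\le\beta^{2p}\}$, and $X^{br}=\{\Gamma=\beta^{2p}\}$, which for a hyper-sphere ($R(\bm{\xi})\equiv r_o$, so $\Gamma(\bm{\xi})=(\rho/r_o)^{2}$ with $p=1$) is exactly $\{\rho=\beta r_o\}$ as in the statement. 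Taking the obstacle center as the reference point makes $\hat{\bm{r}}(\bm{\xi})=(\bm{\xi}-\bm{\xi}^c)/\rho$ radial and parallel to $\nabla\Gamma(\bm{\xi})$, so the tangents $\bm{e}_i(\bm{\xi})$ in \eqref{eq:avoidance_matrixE} span its orthogonal complement; hence $\bm{E}(\bm{\xi})$ is orthonormal, $\bm{E}^{-1}=\bm{E}^{\top}$, and $\bm{M}(\bm{\xi})=\bm{E}\bm{D}\bm{E}^{\top}$ merely rescales the radial part of a vector by $\lambda_r(\bm{\xi})$ and its tangential part by $\lambda_e(\bm{\xi})$, staying full rank for $\rho\neq r_o$.

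Next I would differentiate $\rho$ along a solution of \eqref{eq:avoidance_main}: by orthonormality, $\dot\rho=\hat{\bm{r}}^{\top}\dot{\bm{\xi}}=\hat{\bm{r}}^{\top}\bm{M}f=\lambda_r(\bm{\xi})\,\hat{\bm{r}}^{\top}f=\bigl(1-(r_o/\rho)^{2}\bigr)\,g(\bm{\xi}_t)$, where $g(\bm{\xi}):=\hat{\bm{r}}(\bm{\xi})^{\top}f(\bm{\xi})$ is the nominal radial speed and the tangential stretching $\lambda_e$ drops out because it acts orthogonally to $\hat{\bm{r}}$. Since $1-(r_o/\rho)^{2}>0$ for $\rho>r_o$, the sign of $\dot\rho$ equals that of $g$ outside the hard region; this recovers impenetrability of the hard core ($\dot\rho=0$ on $\rho=r_o$) and isolates the content of the theorem, namely that a trajectory on which the nominal DS points toward the obstacle, $g<0$, does not merely skim the outer hull but crosses $X^{br}$. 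This inward-pointing property is what the learned DS supplies in the setting of interest: \eqref{eq:DS_formel} is globally asymptotically stable toward $\bm{\xi}^\star$ and the deformable obstacle lies across the demonstrated path, so $g<0$ on the shell that the trajectory traverses while approaching the obstacle.

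The remainder is a routine comparison estimate. If $\rho(0)\le\beta r_o$ then, since $\rho(0)>r_o$ by hypothesis, the trajectory already lies in the soft region and nothing is to be proved. Otherwise consider the compact shell $K:=\{\bm{\xi}:\beta r_o\le\|\bm{\xi}-\bm{\xi}^c\|\le\rho(0)\}$, which does not contain $\bm{\xi}^\star$; continuity of $f$ together with the standing sign condition gives $\delta:=-\max_{\bm{\xi}\in K}g(\bm{\xi})>0$, while $1-(r_o/\rho)^{2}\ge1-\beta^{-2}=:\lambda_{\min}>0$ on $K$. As long as the solution stays in $K$ we therefore have $\dot\rho\le-\lambda_{\min}\delta<0$, so $\rho$ strictly decreases and cannot leave $K$ through its outer face; by comparison $\rho$ reaches the inner face $\rho=\beta r_o$ at a finite time $t^\star\le(\rho(0)-\beta r_o)/(\lambda_{\min}\delta)$, and full rank of $\bm{M}$ on $K$ keeps the solution defined up to $t^\star$. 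There $\Gamma_k(\bm{\xi}_{t^\star})=1$ and $\Gamma(\bm{\xi}_{t^\star})>1$ (because $\rho=\beta r_o>r_o$), so $\bm{\xi}_{t^\star}$ lies in the soft region, which is the claim.

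The step I expect to be the genuine obstacle is the sign/coercivity condition $g\le-\delta<0$ on $K$. In full generality a motion could be pushed away from the obstacle and never meet it, so the theorem has to be read — and proved — for motions that the nominal DS drives toward the deformable body, and making this rigorous means extracting the inward-pointing property from the global asymptotic stability of \eqref{eq:DS_formel} together with the placement of $\bm{\xi}^\star$ relative to the obstacle; with that fixed the above estimates are elementary. Two smaller points round things out: the hard-core impenetrability invoked above is the known property of the modulation \eqref{eq:avoidance_main}; and for $p>1$, where the level sets of $\Gamma$ are $\ell_{2p}$-balls rather than Euclidean spheres, one runs the identical argument with $\Gamma(\bm{\xi})$ itself as the scalar coordinate — using $\Gamma_k=\Gamma/\beta^{2p}$ and $\nabla\Gamma\perp\bm{e}_i$ so that $\dot\Gamma$ again carries only the factor $\lambda_r$ — to conclude that $\Gamma$ decreases to $\beta^{2p}$ in finite time while remaining above $1$.
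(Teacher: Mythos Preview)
Your argument is correct and takes a genuinely different route from the paper's. The paper's proof is a purely local boundary computation: it shows that on the outer hull $X^{br}$ the normal component of the modulated velocity does not vanish, i.e.\ $\bm{n}(\bm{\xi}^{br})^{\top}\dot{\bm{\xi}}^{br}\neq 0$, by observing that $\hat{\bm{r}}$ is the first column of $\bm{E}$ and that the first eigenvalue $\lambda_r=1-1/\Gamma$ is strictly positive at $\Gamma=\beta^{2p}>1$ (unlike at the hard hull $\Gamma=1$, where it is zero). That is essentially the same eigenvalue observation you use when you compute $\dot\rho=\lambda_r(\bm{\xi})\,\hat{\bm{r}}^{\top}f$, but the paper stops there, treating ``normal velocity is non-zero on $X^{br}$'' as equivalent to ``the trajectory enters.'' The paper also folds the additive term from Eq.~\eqref{eq:speedup_velocity} into the boundary calculation, even though the theorem statement refers only to Eq.~\eqref{eq:avoidance_main}; you, correctly, prove what the theorem actually asserts.

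What your global argument buys over the paper's is exactly what you flag: the paper's non-vanishing normal velocity fixes neither the \emph{sign} of the crossing nor whether the trajectory ever \emph{reaches} $X^{br}$ at all. Your Lyapunov-type reduction to the scalar $\rho$ (or $\Gamma$), together with the compactness bound $\dot\rho\le-\lambda_{\min}\delta$, supplies both a direction and a finite entry time $t^\star\le(\rho(0)-\beta r_o)/(\lambda_{\min}\delta)$. The price is that you must assume the inward-pointing condition $g=\hat{\bm{r}}^{\top}f<0$ on the approach shell, which you rightly identify as the substantive hypothesis the theorem needs but does not state; the paper simply leaves this implicit. Your treatment of the $p>1$ case via $\Gamma$ itself, and your remark that orthonormality of $\bm{E}$ relies on the hyper-sphere geometry (so $\hat{\bm{r}}\parallel\nabla\Gamma$), are both to the point and absent from the paper.
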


\begin{proof}
Continuity is maintained after entering the obstacle's soft region by controlling the motion at the boundary. Specifically, the normal velocity at the boundary points ensures a smooth transition during the interaction. \(\bm{\xi}^{br} \in X^{br}\) does not vanish:
\begin{align}
\bm{n}(\mathrm{\bm{\xi}}^{br})^T \dot{\mathrm{\bm{\xi}}}^{br} &\neq 0 \quad \forall \mathrm{\bm{\xi}}^{br} \in X^{br}, 
\label{pr:notequal_0}
\end{align}
where the normal vector is defined as:
\begin{align}
\bm{n}(\mathrm{\bm{\xi}}^{br}) &= \frac{\mathrm{\bm{\xi}}^{br} - \mathrm{\bm{\xi}}^c}{\|\mathrm{\bm{\xi}}^{br} - \mathrm{\bm{\xi}}^c\|}, \quad \tilde{\mathrm{\bm{\xi}}}^{br} = \mathrm{\bm{\xi}}^{br} - \mathrm{\bm{\xi}}^c, \notag \\
\Rightarrow \bm{n}(\tilde{\mathrm{\bm{\xi}}}^{br}) &= \frac{\tilde{\mathrm{\bm{\xi}}}^{br}}{(\ln(k) + 1)\, r_o} \quad \forall \mathrm{\bm{\xi}}^{br} \in X^{br}.
\label{pr:normal_vector}
\end{align} 

Substituting Eq.~\eqref{eq:speedup_velocity} into Eq.~\eqref{pr:notequal_0}, where \(\mathrm{\bm{\xi}}^{b}\) is the original velocity vector based on the dynamic modulation matrix,

\begin{align}
\mathbf{n}(\tilde{\bm{\xi}}^{br})^{T} \Big(& \dot{\bm{\xi}}^{b} \pm \mathbf{R}(\theta_2)\,\mathbf{n}(\tilde{\bm{\xi}}^{br})\,\hat{\bm{r}}\Big(\frac{\bm{\xi}^{b}}{\ln(k) + 1}\Big) \nonumber\\[1ex]
& \quad\cdot  c \left(\frac{1}{\Gamma_{k^n}^n(\bm{\xi})}\right)^2 \Big) \neq 0.
\label{summarized}
\end{align}

There are two distinct parts, which are evident from Eq.~\eqref{summarized} that the expression is non-zero:
\begin{align}
\bm{n}(\tilde{\mathrm{\bm{\xi}}}^{br})^T \dot{\mathrm{\bm{\xi}}}^{b} &\neq 0, 
\label{pr:ndotvec_notequal0}
\end{align} 
\begin{align}
\pm \bm{n}(\tilde{\mathrm{\bm{\xi}}}^{br})^T \left( \bm{R}(\theta_2) \, \hat{\bm{r}}\left(\frac{\mathrm{\bm{\xi}}^{b}}{(\ln(k) + 1)}\right) \, c \left( \frac{1}{\Gamma_{k}(\bm{\xi}^{b})} \right)^2 \right) &\neq 0.
\label{pr:extramodified}
\end{align}

Substituting Eqs.~\eqref{pr:normal_vector} and~\eqref{eq:avoidance_main} into Eq.~\eqref{pr:ndotvec_notequal0} yields:
\begin{equation}
\bm{n}(\mathrm{\bm{\xi}}^{br})^T \dot{\mathrm{\bm{\xi}}}^b
= \frac{\left(\mathrm{\bm{\xi}}^b\right)^T}{(\ln(k) + 1)\, r_o} \bm{E}\left(\mathrm{\bm{\xi}}^b\right) 
\bm{D}\left(\mathrm{\bm{\xi}}^b\right) \bm{E}\left(\mathrm{\bm{\xi}}^b\right)^{-1} f(\cdot).
\label{pr:ndotEDE}
\end{equation}  

Given that \(\bm{n}(\tilde{\mathrm{\bm{\xi}}}^{br})\) corresponds to the first eigenvector of \(E\left(\mathrm{\bm{\xi}}^b\right)\), Eq.~\eqref{pr:ndotEDE} simplifies to:
\begin{equation}
\bm{n}(\mathrm{\bm{\xi}}^{br})^T \dot{\mathrm{\bm{\xi}}}^b = 
\begin{bmatrix} (\ln(k) + 1)\, r_o \\ 0_{d-1} \end{bmatrix}^T \bm{D}\left(\mathrm{\bm{\xi}}^b\right) \bm{E}\left(\mathrm{\bm{\xi}}^b\right)^{-1} f(\cdot)
\label{pr:matrixdotDE}
\end{equation}

For all points on the boundary of soft region, the first eigenvalue is non-zero, deviating from the original boundary, i.e., \(\lambda^1 \neq 0\), \(\forall \mathrm{\bm{\xi}}^{br} \in X^{br}\). Consequently, \(\bm{n}(\tilde{\mathrm{\bm{\xi}}}^{br})^T \dot{\mathrm{\bm{\xi}}}^{br} \neq 0\). 
\hfill 
\end{proof}



%
%

\subsection{Intersection}

When multiple deformable obstacles intersect within the interaction space, the combined regions of influence create complex dynamics that affect the robot's movement. Specifically, for each pair of intersecting obstacles (as shown in Fig. \ref{fig:obs.2}) \( n \) and \( p \), where \( p \neq n \) and \( p = 1, \ldots, N \), consider the region where following these distance limits.

\[
\begin{cases}
\Gamma^n(\bm{\xi}) > 1, \\
\Gamma^p(\bm{\xi}) > 1, \\
0 < \Gamma_{k^n}^n(\bm{\xi}) \leq 1, \\
0 < \Gamma_{k^p}^p(\bm{\xi}) \leq 1.
\end{cases}
\]

When the robot passes through the intersection of two obstacles, the external forces increase due to soft obstacle deformation, potentially causing the robot to become stuck. To ensure safety and smoothness during passage through the intersecting region, we reduce the robot’s velocity to allow an additional safety margin and reduce the required actuation force. 

The updated expression for the velocity vector \( \dot{\bm{\xi}}' \), which incorporates the effects of intersecting obstacles, is given by:
\begin{align}
\dot{\bm{\xi}}' = \dot{\bm{\xi}} 
& -  \sum_{n=1}^{N} \sum_{\substack{p=1 \\ p \neq n}}^{N} \left( \operatorname{S}(\theta_2, \theta_1^n) \, \bm{R}(\theta_2) \, \hat{\bm{r}}^n(\bm{\xi}) \cdot\bm{e}_1^n (\bm{\xi}) \right) \notag \\
& \qquad \cdot \left( c \left( \frac{2}{\Gamma_{k^n}^n(\bm{\xi}) + \Gamma_{k^p}^p(\bm{\xi})} \right)^2 \right),
\label{eq:slowdown_velocity}
\end{align}

In this formulation, \( \hat{\bm{r}}^n(\bm{\xi}) \) represents the unit vector in the reference direction associated with obstacle \( n \). The $\bm{e}_1^n (\bm{\xi})$ represents a $(d - 1)$-dimensional orthonormal set that lies in a space perpendicular to the gradient of the distance function. The function \( \Gamma_{k^n}^n(\bm{\xi}) \) and \( \Gamma_{k^p}^p(\bm{\xi}) \) represent the obstacle-specific impact parameters based on their stiffness and spatial configuration. The adjustment is made by summing contributions from all obstacle pairs, where each contribution is modulated by a factor that depends on the relative strengths or distances of the obstacles. 

The presence of intersecting obstacles introduces additional resistance due to the overlap of their influence regions. Specifically, when multiple deformable obstacles intersect, their combined effects increase external resistance, reducing the robot's movement. This approach effectively manages the complexity of interactions among various obstacles, leading to more realistic and controlled navigation through the interaction space. 

\subsection{Safety Margin}
Taking a more comprehensive approach is crucial when encountering a higher density of obstacles. For example, prioritizing the avoidance of more rigid obstacles is essential to ensure safety in real-world scenarios. To address this, we incorporate a safety factor with predefined limits to ensure successful navigation while maintaining safety. The desired safety margin around an object can be obtained by scaling the state variable (in the obstacle frame of reference) in the dynamic modulation matrix $\bm{M}({\bm{\xi}})$ given as follows:
\begin{equation}
    \bm{M}({\bm{\xi}}_\eta) = \bm{E}({\bm{\xi}}_\eta) \bm{D}({\bm{\xi}}_\eta) \bm{E}({\bm{\xi}}_\eta)^{-1}
    \label{eq:safety}
\end{equation}
where \( {\bm{\xi}}_\eta = \frac{{\bm{\xi}}}{\eta} \) represents the element-wise division of \( \bm{\xi} \) by \( \eta \in \mathbb{R}^d \), and \( \eta_i \geq 1 \) for all \( i \in \{1, \ldots, d\} \) is the desired safety factor. This safety factor inflates the object along each direction \( \tilde{\bm{\xi}}_i \) by the magnitude \( \eta_i \) (in the obstacle frame of reference). By selecting different values for each \( \eta_i \), one can control the required safety margin along the corresponding direction of the object, thus ensuring safer and more controlled navigation through dense obstacle environments.

\subsection{Moving Obstacles}


The modulation for moving obstacles described is based on the approach proposed in \cite{huber2019avoidance}. The modulation is performed in the obstacle reference frame, and then transformed to the inertial frame \cite{khansari2012dynamical}.
For deformable obstacles that can move but maintain an overall fixed shape, we keep the same stiffness coefficient $k$ to ensure consistency in the robot's interaction regardless of its motion. By performing the modulation within the obstacle’s reference frame, 
whose origin is defined by the obstacle's center \(\bm{\xi}^c\), we obtain the relative position $
\tilde{\bm{\xi}} = \bm{\xi} - \bm{\xi}^c.$
The robot's relative velocity with respect to the obstacle in the obstacle reference frame, \(\dot{\tilde{\bm{\xi}}}\), is derived from its linear and angular velocities in the obstacle reference frame, \(\dot{\bm{\xi}}^{L,o}\) and \(\dot{\bm{\xi}}^{R,o}\), via
\begin{align}
\dot{\tilde{\bm{\xi}}} = \dot{\bm{\xi}}^{L,o} + \dot{\bm{\xi}}^{R,o} \times \tilde{\bm{\xi}}.
\end{align}
The robot’s desired velocity \(\dot{\bm{\xi}}\) is obtained by subtracting the obstacle’s motion within the DS, as follows:

\begin{align}
\dot{\bm{\xi}} &= \bm{M}(\bm{\xi}) \left( f(\bm{\xi}) - \dot{\tilde{\bm{\xi}}} \right) + \dot{\tilde{\bm{\xi}}}
\end{align}
where \(\bm{M}(\bm{\xi})\) denotes the modulation matrix and \(f(\bm{\xi})\) is the predefined DS.
The computed desired velocity $\dot{\bm{\xi}}$ is further processed by the Adaptive Motion Strategy detailed in Section~III. C. This integration ensures that the robot's trajectory is refined to accommodate the interactions of the soft regions of moving obstacles.

\subsection{Perception System Integration}



To enhance the adaptability of the proposed soft obstacle avoidance method in dynamic environments, a camera-based perception system is integrated into the framework. The overall framework consists of three interconnected components: perception system, DS modulation, and robot control. The perception system serves as an additional module that continuously monitors the workspace, employing the camera to detect, classify, and track obstacles. The estimated obstacle states, including positions, velocities, and stiffness coefficients, are transmitted to the DS modulation module. This module, based on the previously described soft obstacle avoidance method, modulates the DS to compute the modified desired velocity. The modified desired velocity is then forwarded to the robot's control, which updates the robot’s state and provides feedback to the modulation module for iterative trajectory adaptation.

\begin{figure}[h]
\centering
\includegraphics[width=0.95\linewidth]{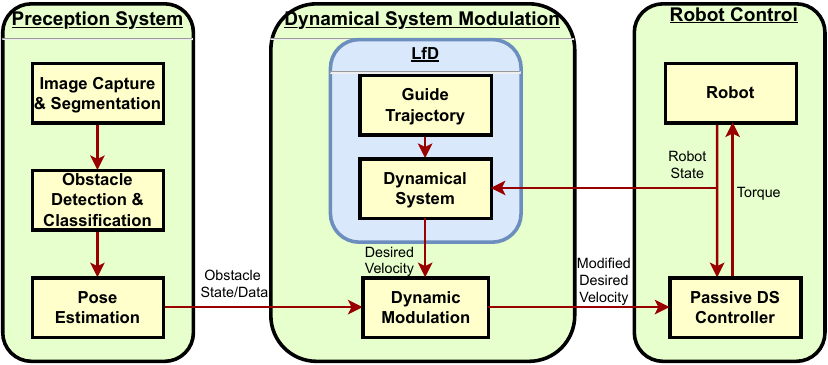}
\caption{Schematic representation of the proposed framework pipeline, illustrating the additional perception system integration alongside the DS modulation and robot control.}
\label{fig:flowchart}
\end{figure}

\section{Simulations}\label{sec:simulations}

We illustrate the results of our approach with a series of simulations that modulate the DS on trajectories from the LASA Handwriting Dataset \cite{khansari2011LearningStableNon-Linear}. To evaluate the robot's interaction with deformable obstacles, we set the attractor at the origin and randomly placed deformable obstacles within the workspace. We then assessed the robot's behavior by measuring the resulting trajectories and performance metrics. These simulations aim to validate how the learned DS, derived from demonstration data, can adapt to varying obstacle conditions, focusing particularly on deformable and non-rigid environments. In real-world scenarios, soft regions might represent deformable objects, such as cloth or foam, while hard regions could simulate rigid barriers like walls or metal. The robot's ability to modulate its response according to obstacle stiffness is critical for autonomous navigation in human environments.

First, we analyze how the robot navigates through multiple intersecting obstacles, highlighting its ability to distinguish between soft and hard regions.
Next, we evaluate the algorithm's efficiency by comparing the trajectory times when navigating through these regions. All simulations were conducted using MATLAB R2023a.

\subsection{Trajectory Analysis}
We simulated the robot's trajectory using demonstrations from the LASA Handwriting Dataset \cite{khansari2011LearningStableNon-Linear} to model the robot's interaction with the soft regions of obstacles. The dataset provides a set of reference trajectories, and we modulated the DS to guarantee smooth navigation around the deformable obstacles. These obstacles are parameterized with distinct soft and hard regions, simulating real-world deformable objects. The trajectory was computed using a linear attractor model, and the integration was performed with the fourth-order Runge-Kutta (RK4) algorithm, chosen for its balance between computational efficiency and high accuracy, making it particularly suited for handling the stiff system dynamics often encountered in environments with both soft and hard obstacles.

As shown in Fig. \ref{fig:sim_avoidance}, regardless of the robot's initial position, the simulated trajectory consistently navigates around the hard regions of the soft obstacles and successfully converges to the attractor. This demonstrates the effectiveness of our approach in avoiding obstacles while ensuring a stable trajectory to the target location.

\begin{figure}[h]
    \vspace*{\mySepBetweenTopAndFig}%
    \centering
   
    \subfloat[]{\label{fig:sshape_single}\scalebarimg{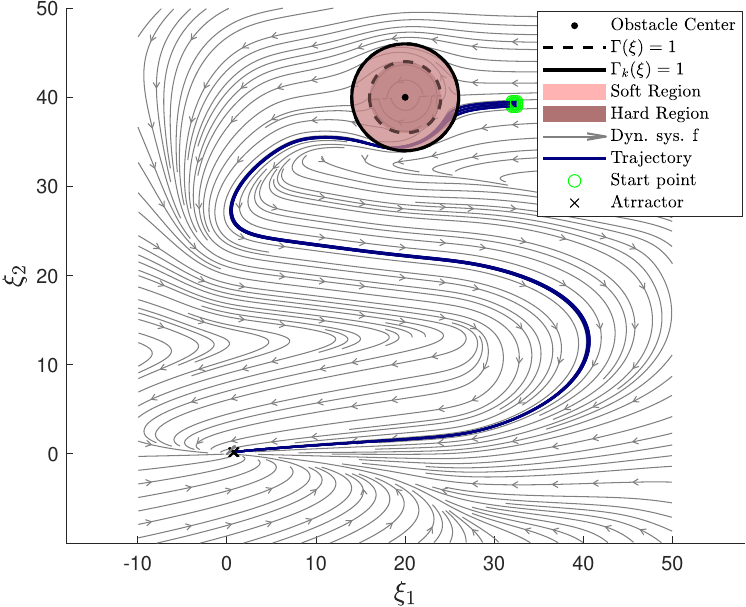}{1.1in}{}{black}}%
    \hspace{\fill}%
    \subfloat[]{\label{fig:angle_single}\scalebarimg{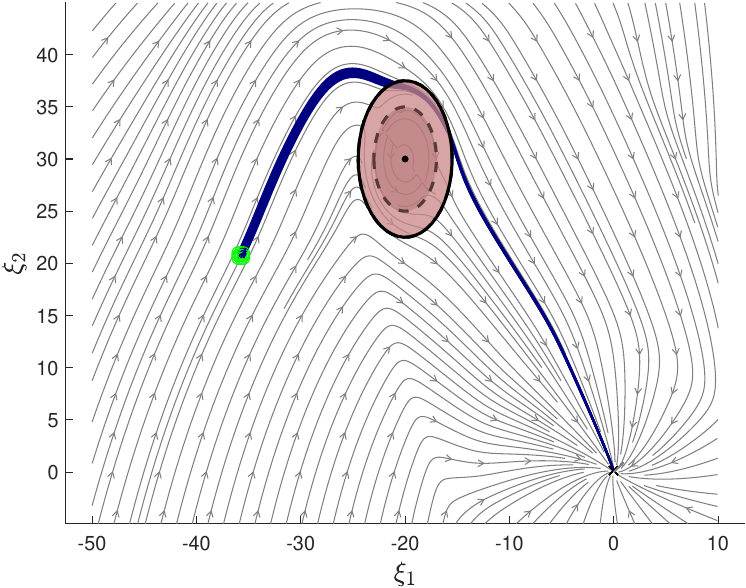}{1.1in}{}{black}}%
    \hspace{\fill}%
    \subfloat[]{\label{fig:leaf_single}\scalebarimg{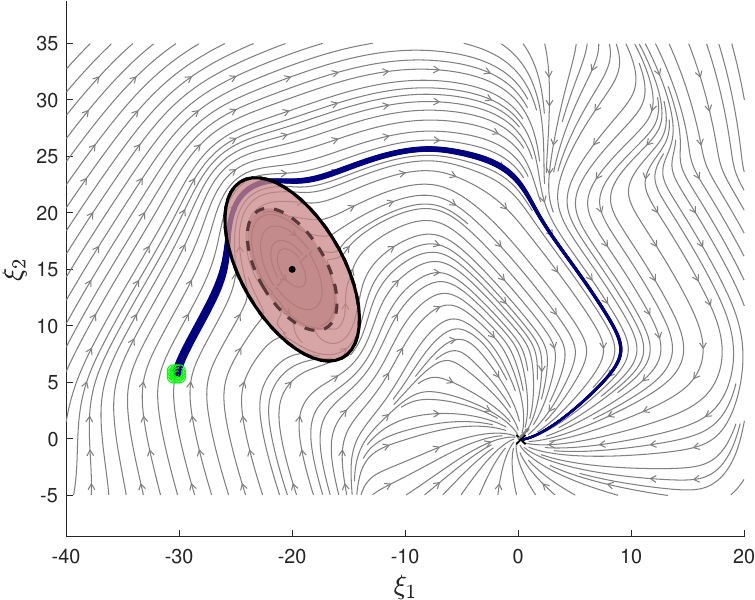}{1.1in}{}{black}}%
    \vspace*{-0.5em}

    \subfloat[]{\label{fig:sshape_inter}\scalebarimg{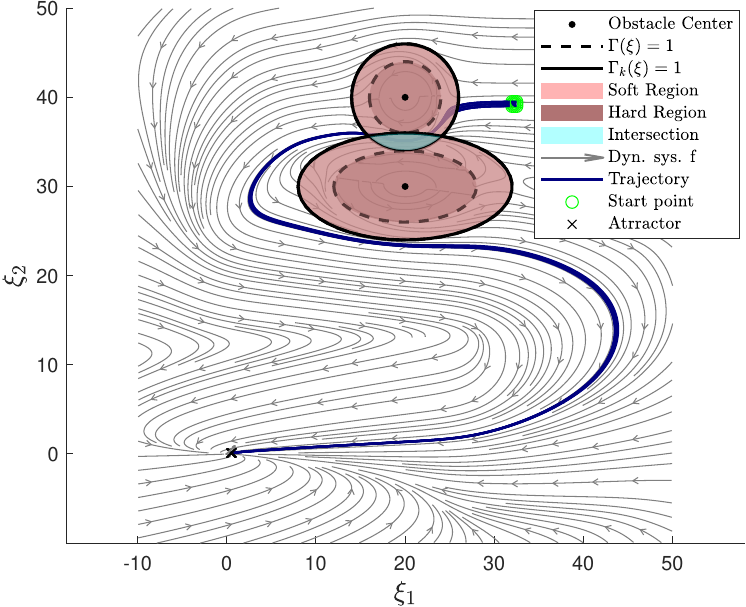}{1.1in}{}{black}}%
    \hspace{\fill}%
    \subfloat[]{\label{fig:angle_inter}\scalebarimg{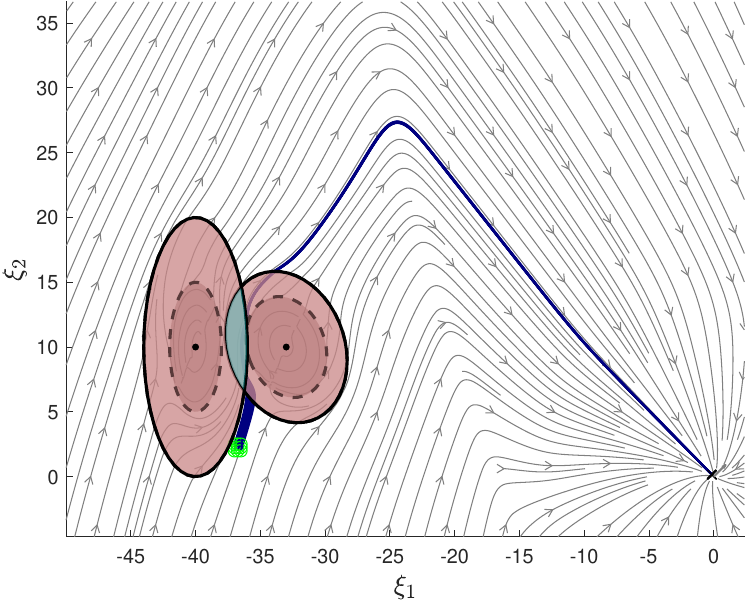}{1.1in}{}{black}}%
    \hspace{\fill}%
    \subfloat[]{\label{fig:leaf_inter}\scalebarimg{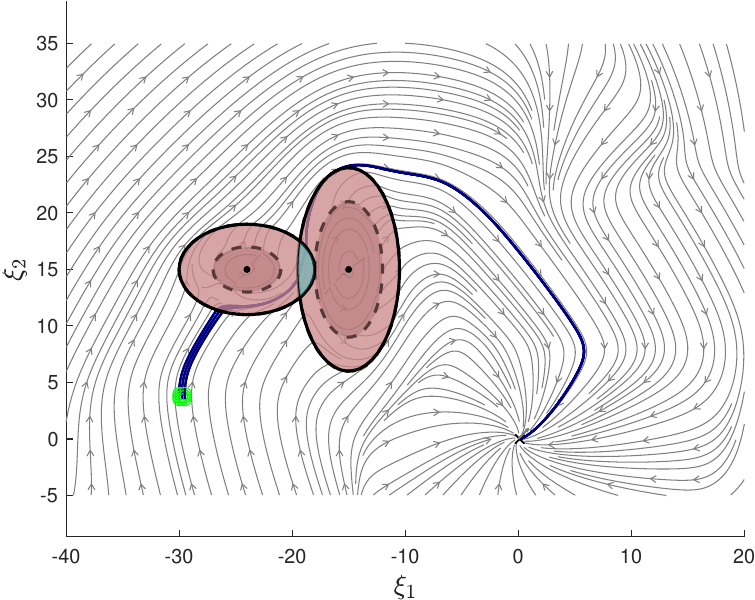}{1.1in}{}{black}}%
    \vspace*{0.5em}
    \caption{%
        Illustration of the robot navigating through the soft regions of various obstacles. The bottom row of the figures shows the robot traversing the intersection of two obstacles, highlighted by the cyan region. In these figures, the x-y axes represent the robot's 2D position. The black streamlines correspond to the modulated DS function, while the blue lines represent the robot's trajectory. The green circle indicates the robot's starting point, and the attractor is positioned at the origin.
    }%
    \label{fig:sim_avoidance}
    \vspace*{\mycaption + \mySepBetweenFigAndCap}%
\end{figure}

\subsection{Time Comparison}
 
To evaluate the robot's navigation time when passing through a deformable obstacle compared to a hard obstacle and an original algorithm \cite{huber2019avoidance} within the DS framework, we used the LASA S-shape motion trajectory. An elliptical obstacle was placed within the robot's path, and the task involved navigating from a starting point located near the obstacle to a fixed target area. This setup allows for a comparative analysis of the robot's performance in handling different types of obstacles.



\begin{figure}[h]
    \vspace*{\mySepBetweenTopAndFig}%
    \centering%
    \subfloat[]{\label{fig:compare_k}\scalebarimg{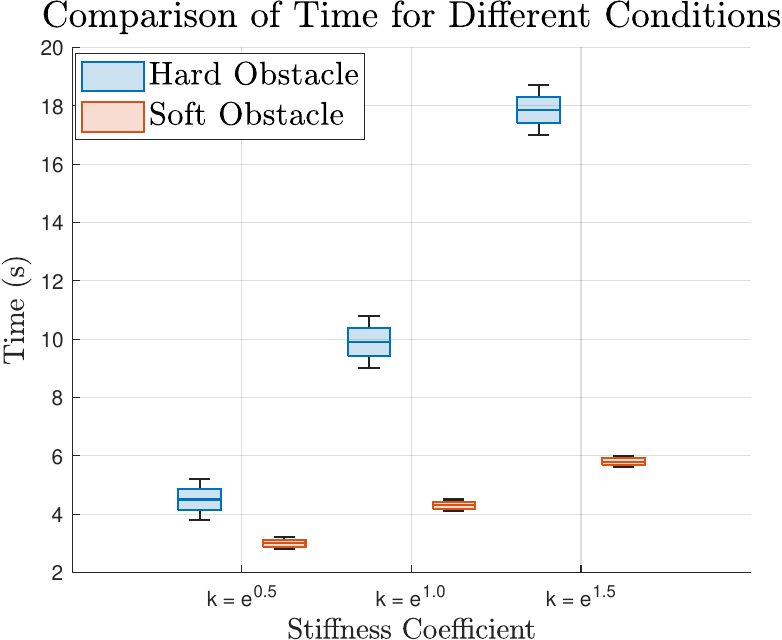}{1.627635606409449in}{}{black}}%
    \hspace{\fill}%
    \subfloat[]{\label{fig:compare_reduced}\scalebarimg{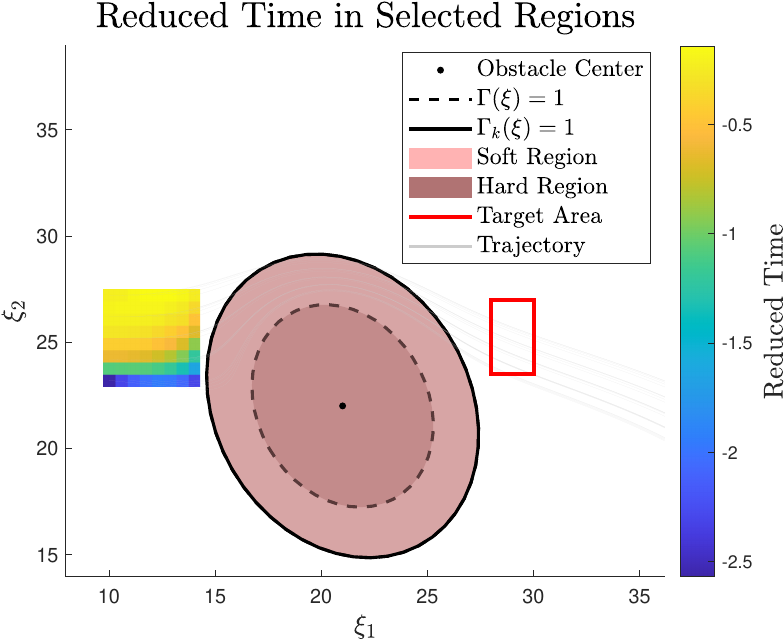}{1.627635606409449in}{}{black}}%
    \vspace{-0.3em}
    \caption{%
        (a) Comparison of different values of $k$ by starting points in selected regions [0.5x0.5] adjacent to soft obstacles. For each region, 25 points were sampled, and the time required for the trajectory to reach the target area (near the obstacle) was calculated. The required time decreases as $k$ increases. 
        (b) Comparison of time reduction in the robot path when \( k = e^{0.5} \), with 64 samples selected within a selecting area [4x4]. This value of $k$ was chosen through empirical testing, maximizing the robot's navigation efficiency. The time reduction was then evaluated against the original method, showing significant reductions, especially in paths closer to hard regions of the obstacle.
    }%
    \label{fig:comparison_figures_k}%
    \vspace*{\mySepBetweenFigAndCap}%
    \vspace{-0.5em}
\end{figure}

In Fig. \ref{fig:comparison_figures_k}, we analyze the impact of varying the parameter \( k \) on the robot’s navigation performance and time efficiency.
The parameter \( k \) in the DS model reflects the adaptation of the robot to varying environmental conditions, where higher values of \( k \) enable quicker adaptations to the obstacle's characteristics. This modulation mirrors the learned sensitivity to changes in the environment, demonstrating the strength of the LfD approach.
Fig. \ref{fig:compare_k} illustrates that as \( k \) increases, there is a notable decrease in the required navigation time, indicating that higher \( k \) values contribute to more efficient navigation, likely due to better adaptation to the soft obstacle's characteristics. Building on this, Fig. \ref{fig:compare_reduced} offers a detailed view of the time reduction achieved with \( k = e^{0.5} \). It shows a significant reduction in navigation time, particularly in regions where the path interacts with the obstacle. 

Overall, the analysis presented in these figures underscores the effectiveness of our approach in optimizing navigation time across various obstacle types. The results show that adjusting the parameter \( k \) significantly impacts the robot’s performance, with higher values leading to more efficient navigation through soft obstacles. Moreover, navigation time is substantially reduced, especially near hard regions. This comprehensive evaluation demonstrates the robustness and adaptability of our approach, validating its effectiveness in achieving both efficient and adaptive navigation strategies. The results demonstrate that the learned DS, trained via demonstration, generalizes well to novel obstacle configurations, particularly in environments with deformable objects. 

\section{Experiments}\label{sec:experiments} 

The practical applicability and real-time performance of the proposed approach are evaluated through a comprehensive set of experiments across various obstacle scenarios (outlined in Table~\ref{tab:experiment_cases}). The experiments are organized into four categories:

\begin{enumerate}
\item \textit{Benchmark Comparison} (A1-A3): Performance is compared with the original hard obstacle avoidance method \cite{huber2019avoidance} using a single deformable obstacle along trajectories from the LASA Handwriting Dataset.
\item \textit{Intersection Scenario} (B1-B4): The robot interacts with two intersecting deformable obstacles, with results benchmarked against the original method under identical DS conditions.
\item \textit{Complex Environment} (C1): A new DS is generated, and the proposed method is tested with five deformable obstacles to validate scalability.
\item \textit{Moving Obstacle} (D1): An obstacle is moved dynamically during execution to evaluate the system’s capability for real-time trajectory adaptation.
\end{enumerate}

\begin{table*}[h]
    \centering
    \vspace{-0.5em}
    \caption{Experiment cases with different obstacle configurations and avoidance methods.}
    \label{tab:experiment_cases}
    \resizebox{\linewidth}{!}{%
    \begin{tabular}{|c|c|>{\centering\arraybackslash}p{26cm}|c|}

        \hline
        \textbf{Obstacles} & \textbf{Experiment Case} & \textbf{Case Description} & \textbf{Method Used} \\
        \hline 
        \multirow{3}{*}{A - One obstacle} 
        & A1 & Hard cylinder (4 cm radius) wrapped in a soft cotton layer (6 cm total radius), \( k = e^{0.5} \) & Our method \\ \cline{2-4}
        & A2 & Hard cylinder only (6 cm radius), \( k = 1.0 \) & Original method \\ \cline{2-4}
        & A3 & Hard cylinder only (4 cm radius), \( k = 1.0 \) & Original method \\
        \hline
        \multirow{4}{*}{B - Two obstacles} 
        & B1 & Hard cylinder (4 cm radius) and elliptical cylindroid (8 cm semi-major axis, 4 cm semi-minor axis), both wrapped in a soft cotton layer, \( k = e^{0.5} \) & Our method \\ \cline{2-4}
        & B2 & Hard cylinder (6 cm radius) and elliptical cylindroid (12 cm semi-major axis, 6 cm semi-minor axis) only, \( k = 1.0 \) & Original method \\ \cline{2-4}
        & B3 & Hard cylinder (4 cm radius) and elliptical cylindroid (8 cm semi-major axis, 4 cm semi-minor axis) only, \( k = 1.0 \) & Original method \\ \cline{2-4}
        & B4 & Intersecting Hard cylinder (6 cm radius) and elliptical cylindroid (12 cm semi-major axis, 6 cm semi-minor axis), \( k = 1.0 \) & Original method \\
        \hline
       \multirow{1}{*}[-1.25em]{C - Five obstacles}
        & \raisebox{-1.25em}{C1} &   Hard cylinder (4 cm radius), \( k = 1.0 \); Intersecting Hard elliptical cylindroid (5 cm semi-major axis, 2 cm semi-minor axis), \( k = e^{1.5} \), with Hard elliptical cylindroid (8 cm semi-major axis, 4 cm semi-minor axis) wrapped in a soft cotton layer, \( k = e^{0.5} \);  and Intersecting Hard elliptical cylindroid (6 cm semi-major axis, 5 cm semi-minor axis), \( k = e^{1.0} \) with Hard elliptical cylindroid (10 cm semi-major axis, 5 cm semi-minor axis) wrapped in a soft cotton layer, \( k = e^{0.5} \)      & \raisebox{-1.25em}{Our method} \\ \cline{2-4}
        \hline
        \multirow{1}{*}{D - Moving obstacle} 
        & D1 & Hard cylinder (4 cm radius) wrapped in a soft cotton layer (6 cm total radius), \( k = e^{0.5} \) & Our method \\ \cline{2-4}
        \hline
    \end{tabular}}
    \vspace{-0.8em}
\end{table*}

\subsection{Experimental Setup}

As shown in Fig. \ref{fig:physical_obs1}, obstacles are placed on a table within the workspace of a 7-DoF Franka Emika robot\footnote{The robot is controlled using the Franka Control Interface (FCI) at 1 kHz. The control loop runs on a control PC (Intel Core i7-10700 @ 2.90GHz) installed with Ubuntu 18.04 LTS and real-time kernel (5.4.138-rt62).}\cite{haddadin2022franka}. The robot navigates this environment using Cartesian impedance control to regulate motion along the Z-axis and orientation, while passive interaction control governs the X and Y axes. A passive DS controller \cite{kronander2015passive} ensures trajectory tracking. 

The deformable obstacles were designed with soft regions filled with cotton to minimize friction forces, thereby enabling more controlled and precise interactions during the trials. Table \ref{tab:experiment_cases} presents the complete setup for the obstacles of four scenarios of experiments.
In Fig. \ref{fig:setup1.1}, we use a cylinder with a radius of 4 cm, wrapped in a cotton layer to a radius of 6 cm (see Fig. \ref{fig:setup2.1}), to achieve \( k = e^{0.5} \). In Fig. \ref{fig:setup_1.2}, we use an elliptical cylinder with a semi-major axis of 8 cm and a semi-minor axis of 4 cm, also wrapped in cotton, to achieve \( k = e^{0.5} \) (see Fig. \ref{fig:setup2.2}).

To monitor the workspace during the \textit{Moving Obstacle} experiment, an Intel RealSense Depth Camera D435i is mounted approximately 1 meter above the table, operating at 30 fps with a 640×480 pixel resolution for color and depth streams. At initialization, three AprilTags \cite{olsson2016apriltag} placed on the table establish reference points for subsequent computations. For object detection and classification, we employ YOLOv8 \cite{jocher2023yolov8}, initialized with the pre-trained YOLOv8n model and fine-tuned for 75 epochs on a custom 720-image dataset (70/30 train/validation split). Object tracking uses DeepSort \cite{wojke2017deepsort} with a pre-trained MobileNet embedder \cite{howard2017mobilenets}.
All computations run on a perception PC with an NVIDIA RTX 3090 GPU, achieving an average pipeline latency of 15 ms to support the system’s real-time requirements. Finally, the position, velocity, and stiffness coefficients of each obstacle are sent to the control PC via UDP for real-time processing.

\begin{figure}[h]
    \vspace*{\mySepBetweenTopAndFig}%
    \vspace{-4mm}
    \centering
   
    \subfloat[]{\label{fig:setup1.1}\scalebarimg{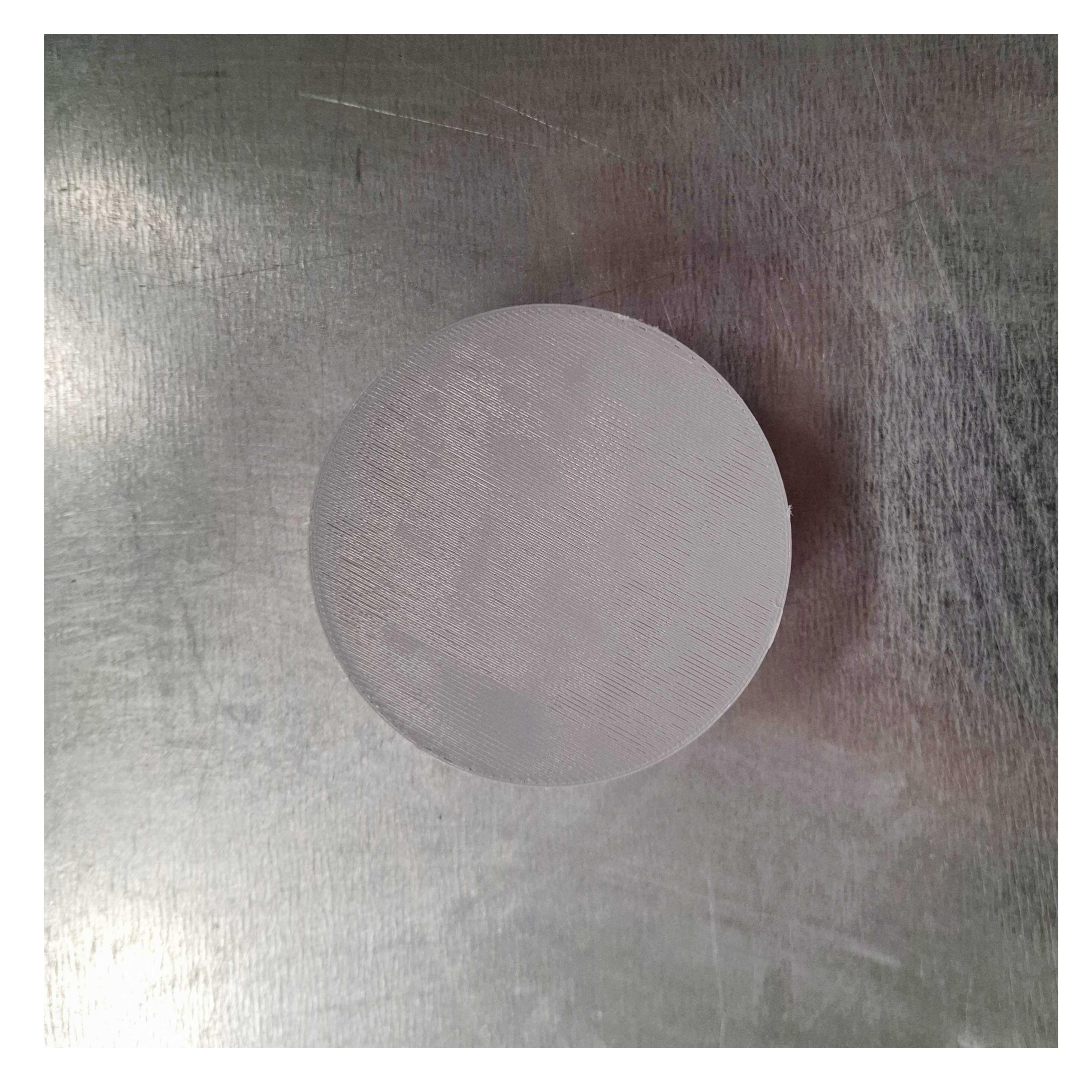}{0.23\linewidth}{}{black}}%
    \hspace{0.01\linewidth}%
    \subfloat[]{\label{fig:setup2.1}\scalebarimg{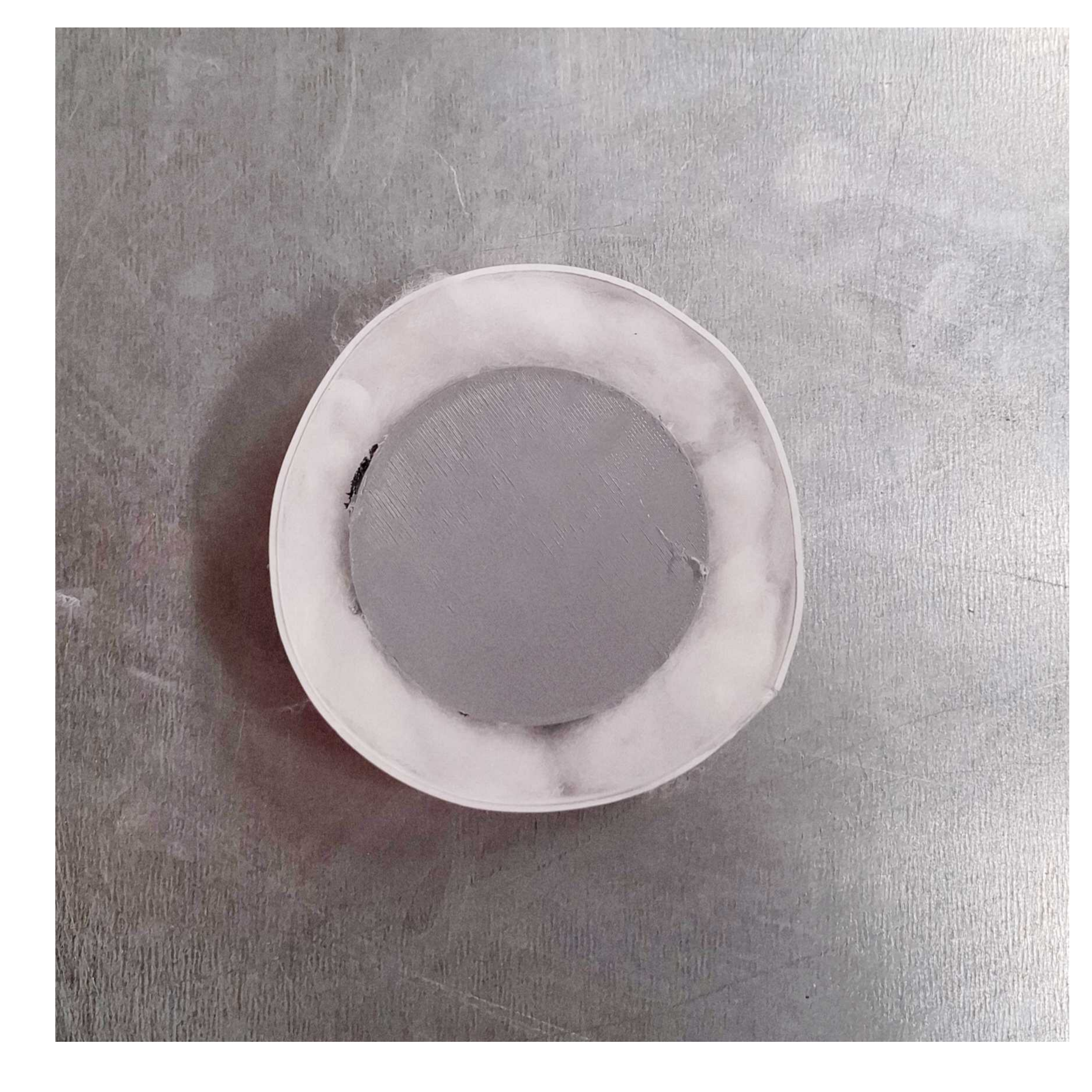}{0.23\linewidth}{}{black}}%
    \subfloat[]{\label{fig:setup_1.2}\scalebarimg{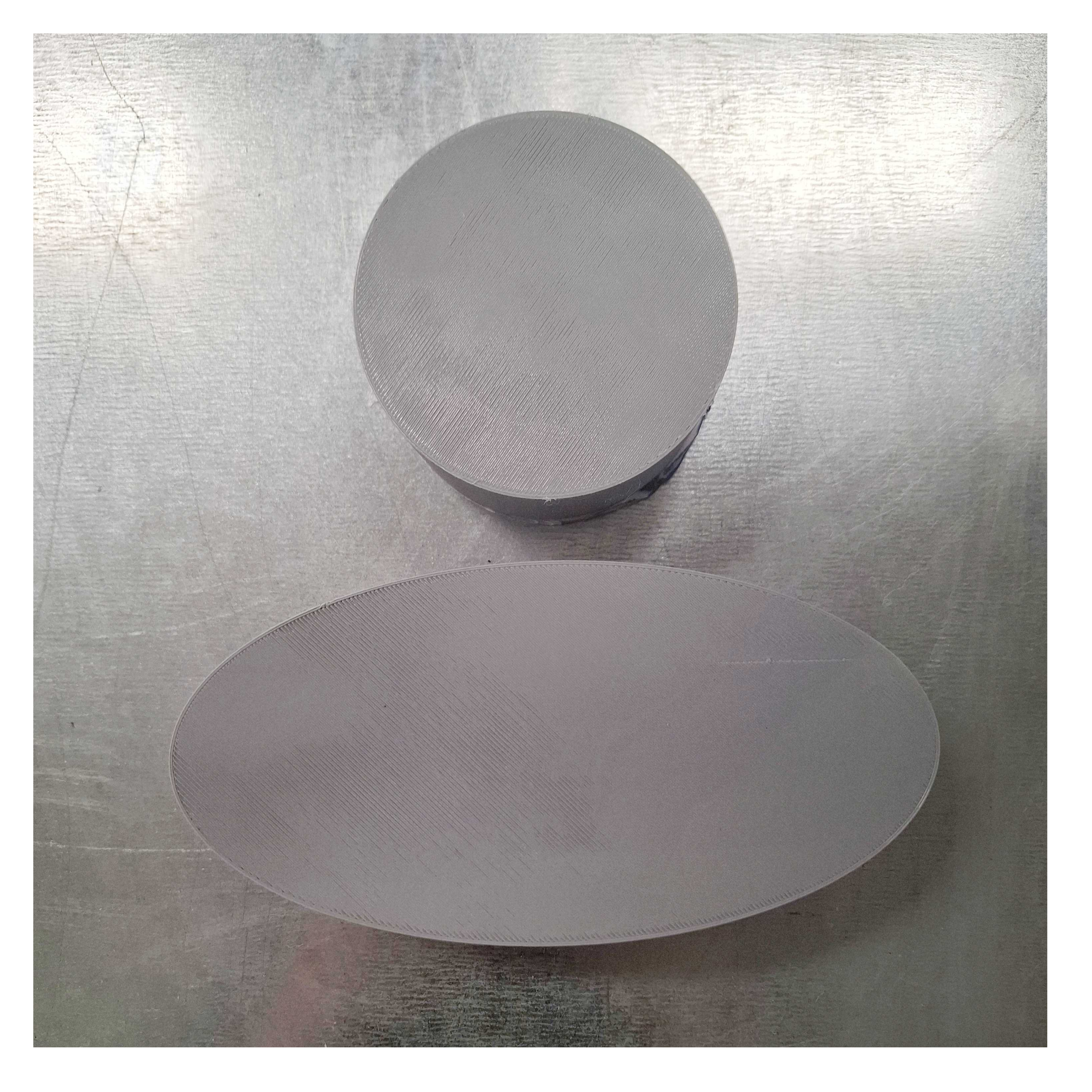}{0.23\linewidth}{}{black}}%
    \hspace{0.01\linewidth}%
    \subfloat[]{\label{fig:setup2.2}\scalebarimg{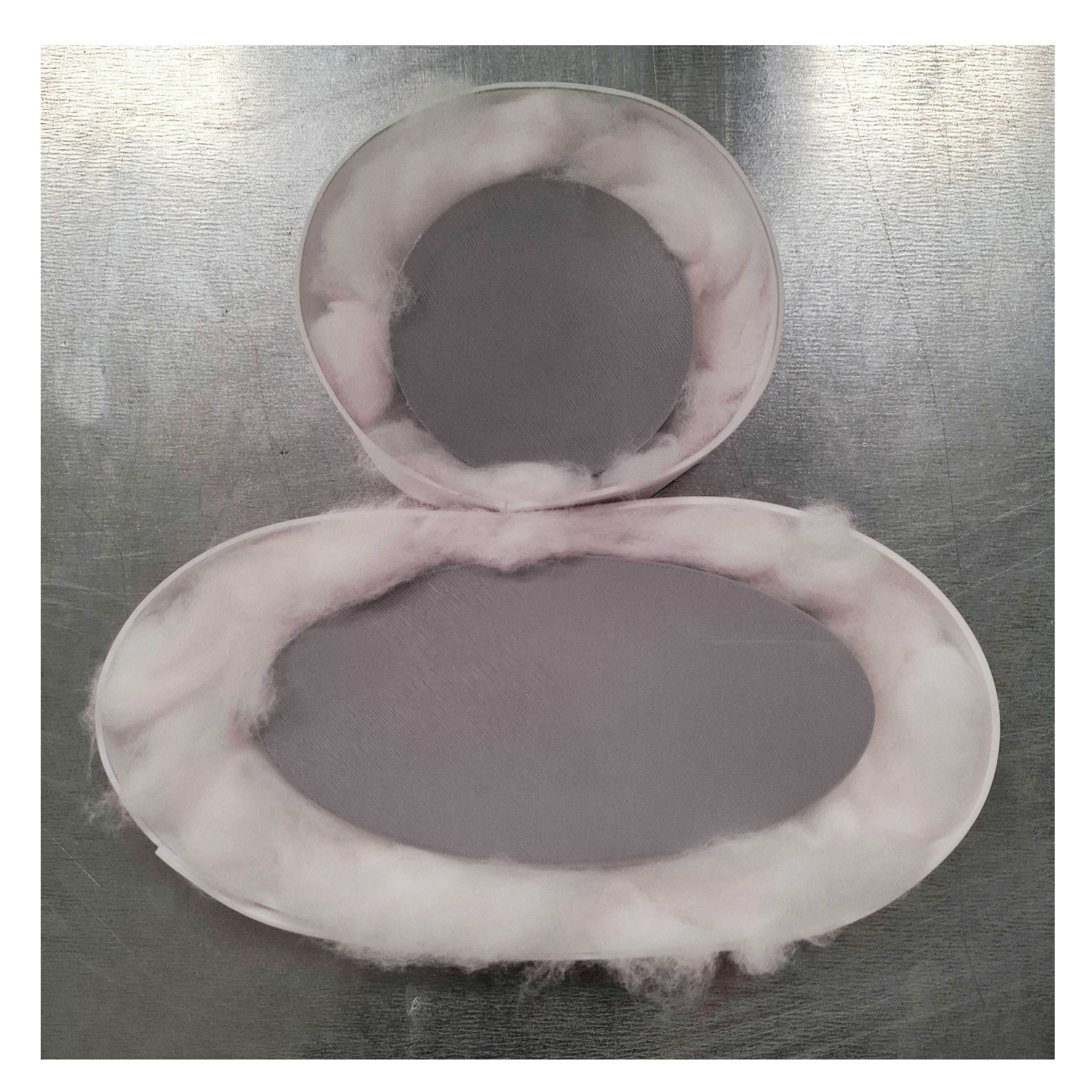}{0.23\linewidth}{}{black}}%
    
    \vspace*{-0.3em}
    \caption{(a) Hard obstacle, (b) Hard obstacle with a soft region, (c) Two hard obstacles, and (d) Two hard obstacles with intersecting soft regions.
            }%
    \label{fig:setup}
    \vspace*{\mySepBetweenFigAndCap}%
\end{figure}

\subsection{Benchmark Comparison and Intersection Scenario} 


In the \textit{Benchmark Comparison} experiments, with our method, the robot navigated through the soft region of a single obstacle, as shown in Fig. \ref{fig:exp_set1}. This resulted in an improvement in velocities compared to the original obstacle avoidance method\cite{huber2019avoidance}, which entirely avoided this region.
In the \textit{Intersection Scenario} experiments, with our method, the robot navigated through the intersection of the two deformable obstacles, reducing the trajectory velocities compared with the original method, as shown in Fig. \ref{fig:exp2_compared}. For both sets, our method showed great adherence to the expected trajectory defined by the DS, represented by the cyan line in the figures.

\begin{figure}[h]
    \vspace{-5mm}
    \centering
   
    \subfloat[]{\label{fig:expA1}\scalebarimg{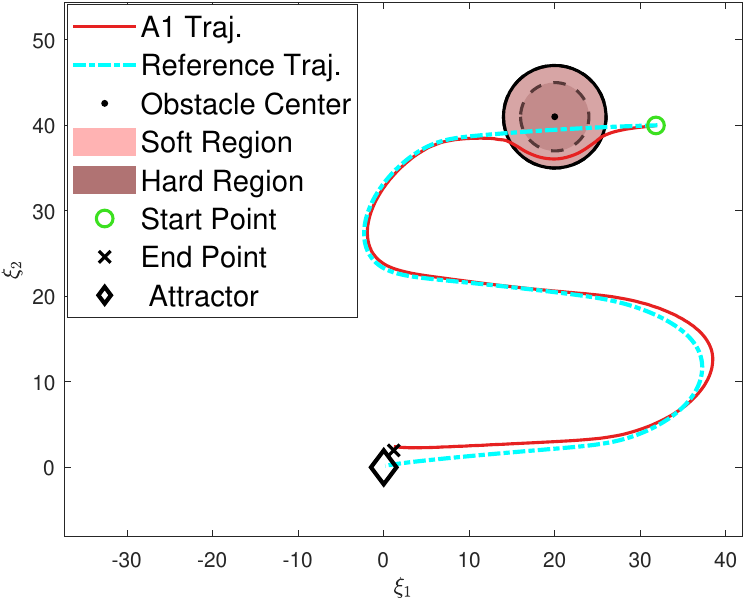}{1.1in}{}{black}}%
    \hspace{\fill}%
    \subfloat[]{\label{fig:expA2}\scalebarimg{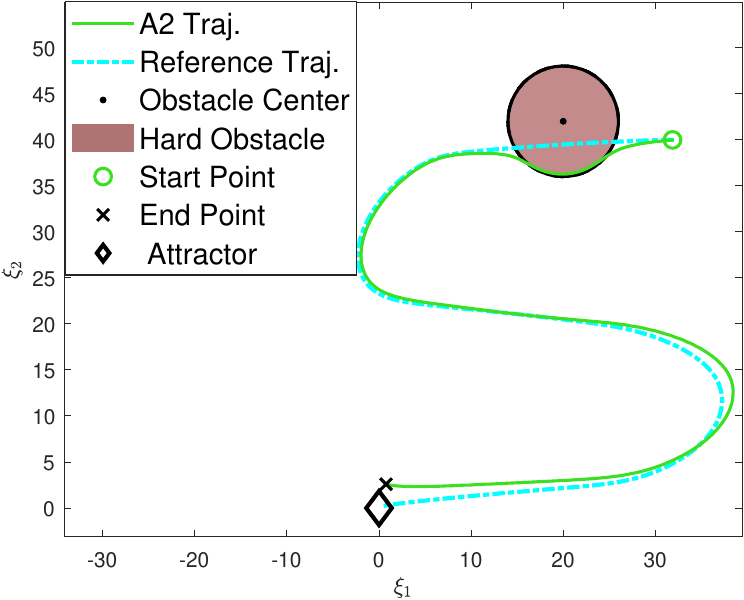}{1.1in}{}{black}}%
    \hspace{\fill}%
    \subfloat[]{\label{fig:expA3}\scalebarimg{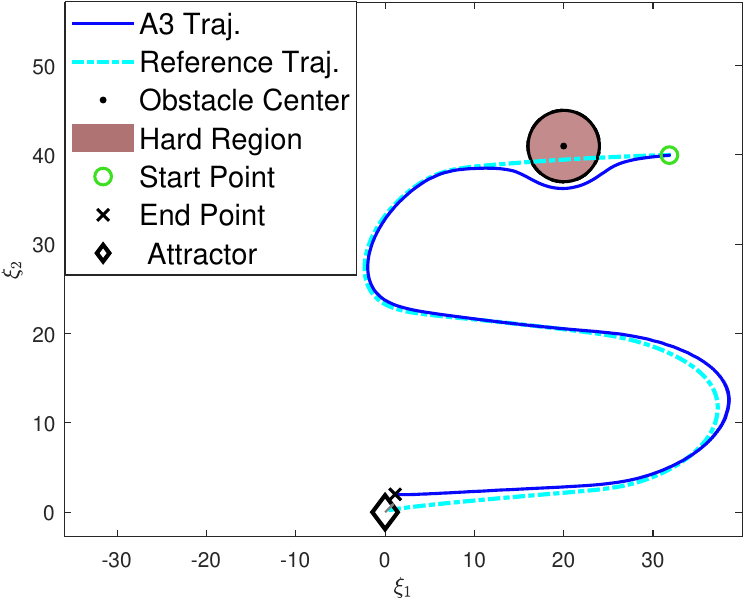}{1.1in}{}{black}}%
    \caption{Results from the three experimental configurations for the Case A are shown, in order of A1, A2 and A3 cases from left to right. The robot effectively interacted with the soft region when using our method, in contrast to the original method.}%
    \label{fig:exp_set1}
    \vspace*{\mySepBetweenFigAndCap}%
    \vspace*{-0.3em}
\end{figure}

\begin{figure}[h]
    \vspace*{-1.2em}%
    \centering
   
    \subfloat[]{\label{fig:exp2.1}\scalebarimg{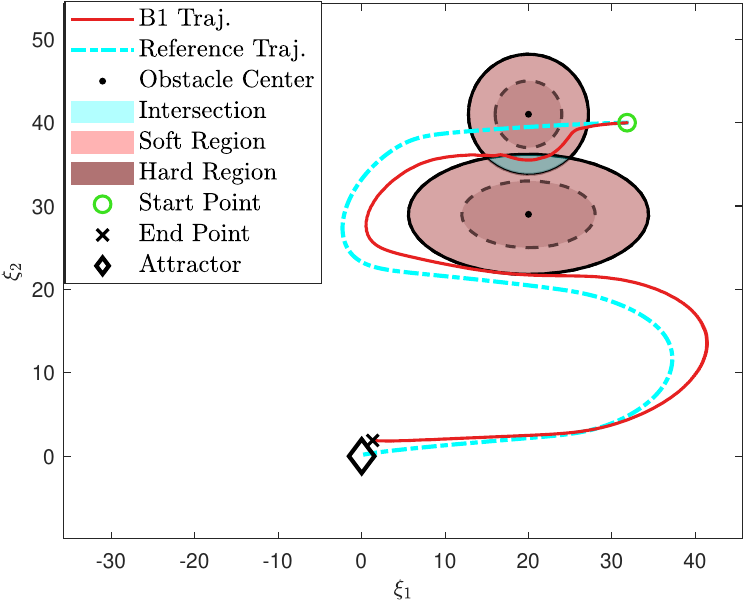}{0.48\linewidth}{}{black}}%
    \hspace{0.01\linewidth}%
    \subfloat[]{\label{fig:exp2.2}\scalebarimg{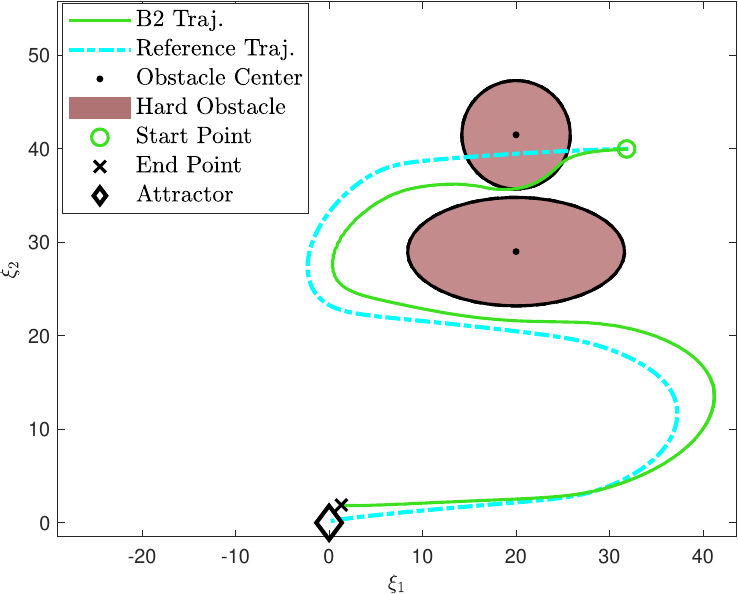}{0.48\linewidth}{}{black}}%
    
    \vspace*{-0.8em} 
    
    \subfloat[]{\label{fig:exp2.3}\scalebarimg{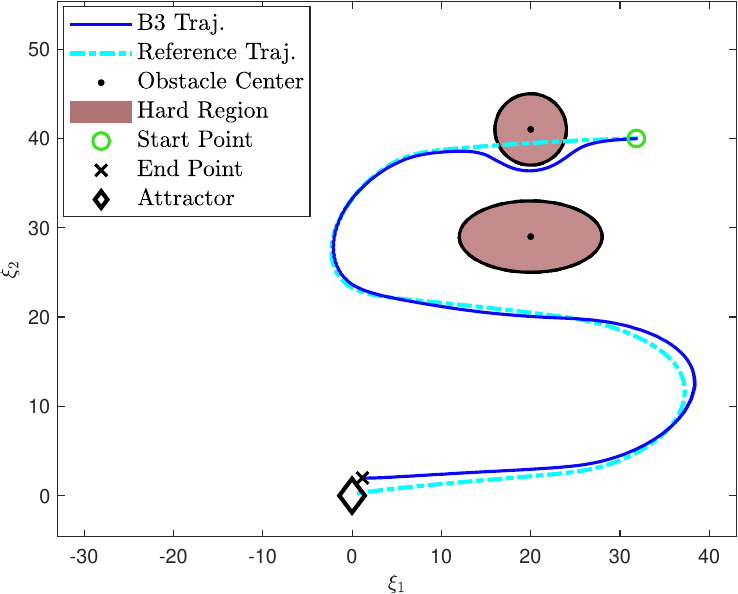}{0.48\linewidth}{}{black}}%
    \hspace{0.01\linewidth}%
    \subfloat[]{\label{fig:exp2.4}\scalebarimg{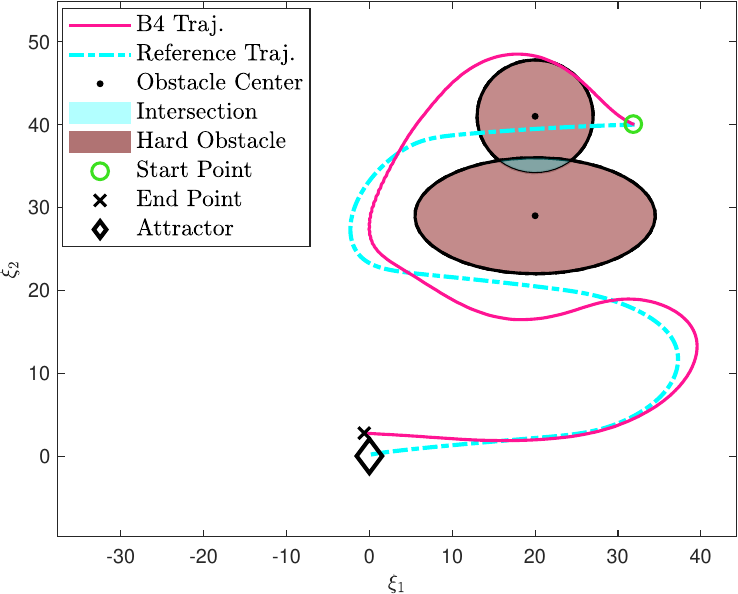}{0.48\linewidth}{}{black}}%
    
    \vspace*{-0.2em}
    \caption{Four configuration cases for the two-obstacle experiments. (a) B1: Using our method with soft obstacles. (b) B2: Using the original method, where the robot passes between two hard obstacles. (c) B3: Using the original method with two hard obstacles. (d) B4: Using the original method to avoid collision with intersecting hard obstacles.}%
    \label{fig:exp2_compared}
    \vspace*{-0.5em}
\end{figure}

Fig. \ref{fig:velocities_a} shows the robot's dynamic velocity modulation during the interaction with the soft region of a single obstacle. As proposed, the robot exhibited higer velocities in this area, with an average improvement of $26.34\%$ compared to the A3 experiment case, in which the original method was used. This performance enhancement underscores the efficacy of our method in exploiting the compliant properties of deformable obstacles to optimize navigation speed.

Furthermore, as shown in Fig. \ref{fig:velocities_b}, the velocities in the intersection area of the two obstacles decreased significantly with our method compared to the original method in the B3  experiment, with a reduction in average velocity of $32.32\%$. This substantial decrease in velocity highlights the robustness of our approach in intersecting obstacle configurations, ensuring safer and more stable robot behavior during critical interactions.

\begin{figure}[h]
    \vspace*{-0.3em}%
    \centering%
    \subfloat[]{\label{fig:velocities_a}\scalebarimg{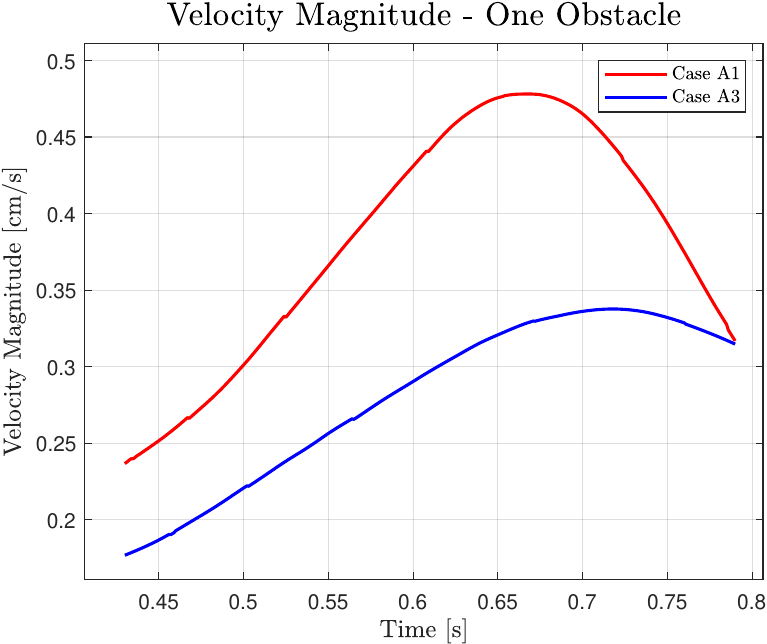}{1.627635606409449in}{}{black}}%
    \hspace{\fill}%
    \subfloat[]{\label{fig:velocities_b}\scalebarimg{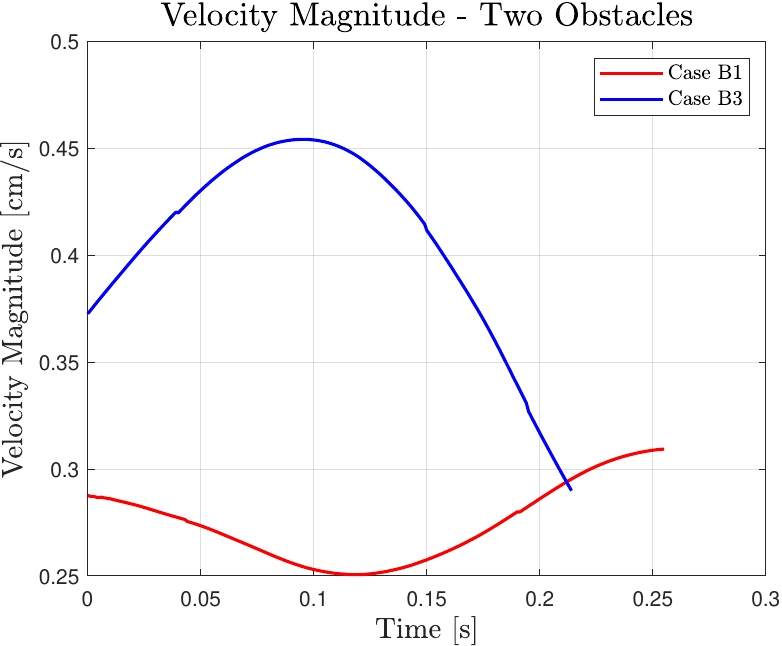}{1.627635606409449in}{}{black}}%

    \caption{%
        (a) Velocity profiles in the soft region of the obstacle, comparing our method (A1) with the original method (A3). (b) Velocity profiles in the soft region of the intersection of the two obstacles, comparing our method (B1) with the original method (B3). The starting times in the graphs correspond to the moments the robot entered the respective regions.
    }%
    \label{fig:comparison_figures}%
    \vspace{-1em}
\end{figure}

\subsection{Complex Environment}
To further validate our approach, we recorded the trajectories 10 times using the LfD approach, allowing the robot to learn the task and generate a new DS. To evaluate the adaptability of our method, we introduced five deformable obstacles into the environment: one single hard obstacle and two pairs of intersecting obstacles. The results, illustrated in Fig.\ref{fig:exp_five}, demonstrate the feasibility of our approach, where the newly generated DS enables the robot to interact effectively with multiple deformable obstacles while maintaining a smooth and adaptive trajectory.
\begin{figure}[h]
    \vspace*{\mySepBetweenTopAndFig}%
    \centering%
    \subfloat[]{\label{fig:new_DS}\scalebarimg{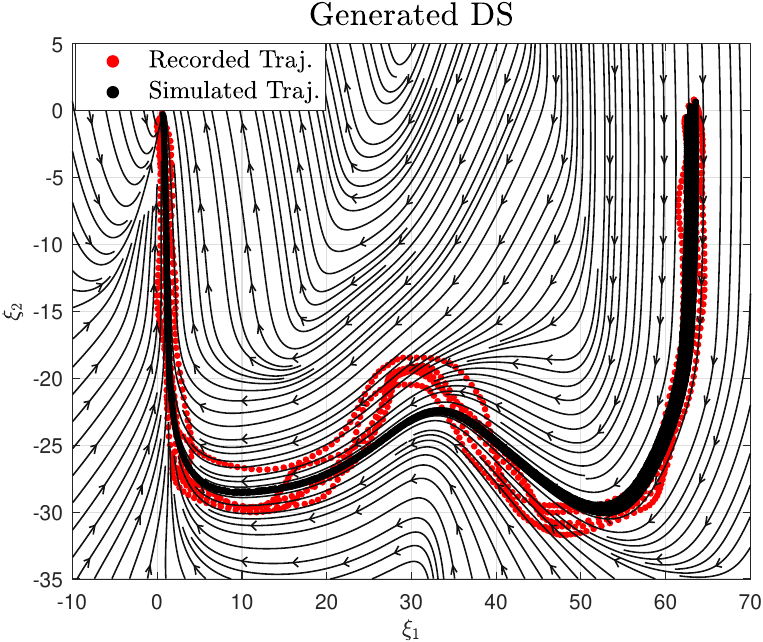}{1.627635606409449in}{}{black}}%
    \hspace{\fill}%
    \subfloat[]{\label{fig:five_traj}\scalebarimg{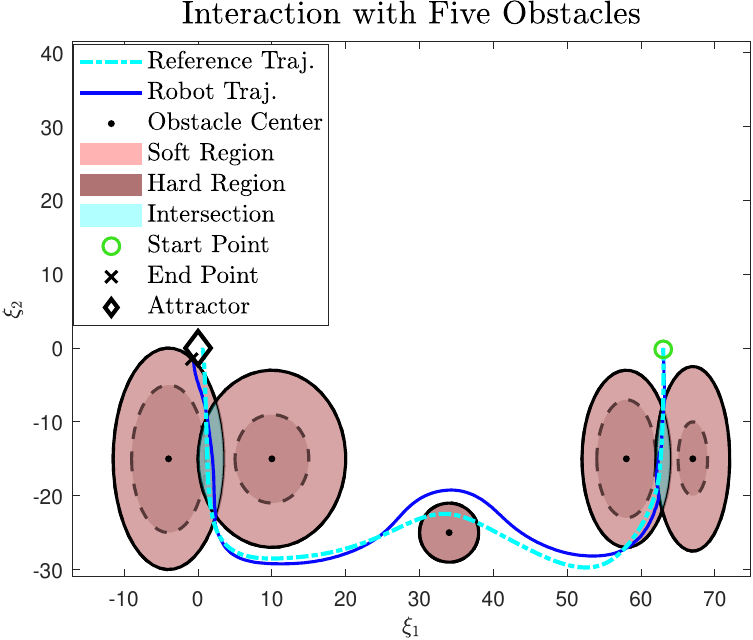}{1.627635606409449in}{}{black}}%
    \vspace{-0.3em}
    \caption{%
        (a) We recorded human demonstrations (red dots) and generated the corresponding DS, represented by black streamlines. The simulated trajectory (black dots) accurately follows this system.
        (b) In the five-obstacle experiment, the robot's trajectory (blue lines) closely mirrors the reference trajectory (cyan lines). The robot's trajectory interacts with the intersecting obstacles and effectively avoids hard obstacles. 
    }%
    \label{fig:exp_five}
\end{figure}

\subsection{Moving Obstacle}

Experiments were conducted with a single moving deformable obstacle within an S-shape DS trajectory. A human operator moves the deformable obstacle along a direction—starting from the top, then moving to the middle, and finally reaching the bottom. This path was deliberately chosen to maximize the intersections with the robot’s trajectory, thereby resulting in extensive interaction between the moving deformable obstacle and the robot. Since the obstacle was not fixed to the table plane and would be pushed away by the robot, it did not physically interact with the robot. Instead, its state was updated in DS to evaluate the real-time capability of the proposed system. 
\begin{figure}[h]
    \vspace*{\mySepBetweenTopAndFig}%
    \vspace{-3mm}
    \centering
   
    \subfloat[]{\label{fig:exp_camera1}\scalebarimg{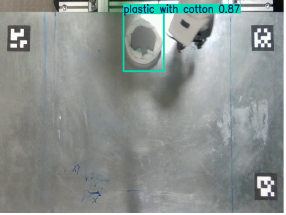}{1.1in}{}{black}}%
    \hspace{\fill}%
    \subfloat[]{\label{fig:exp_camera2}\scalebarimg{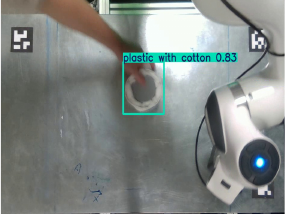}{1.1in}{}{black}}%
    \hspace{\fill} 
    \subfloat[]{\label{fig:exp_camera3}\scalebarimg{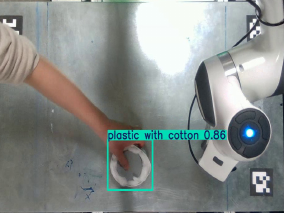}{1.1in}{}{black}}%
    \vspace*{-0.5em}
    
    \subfloat[]{\label{fig:exp_robot1}\scalebarimg{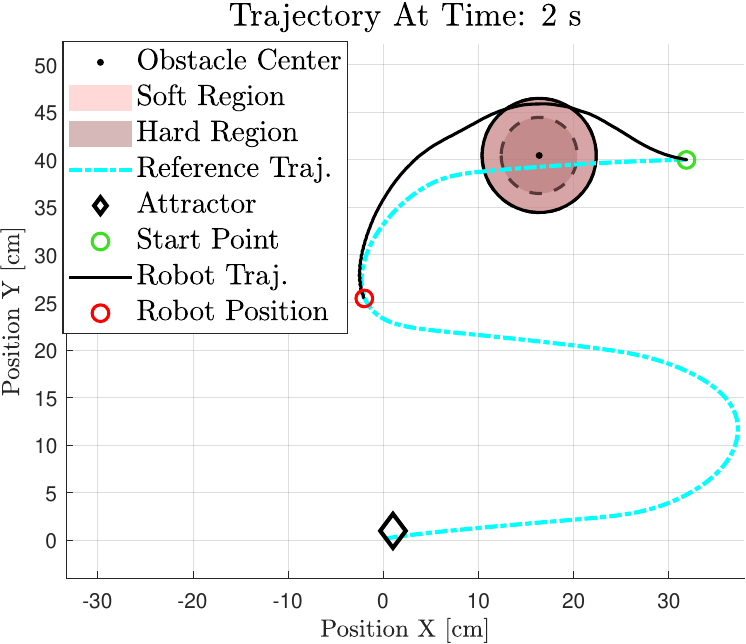}{1.1in}{}{black}}%
    \hspace{\fill}%
    \subfloat[]{\label{fig:exp_robot2}\scalebarimg{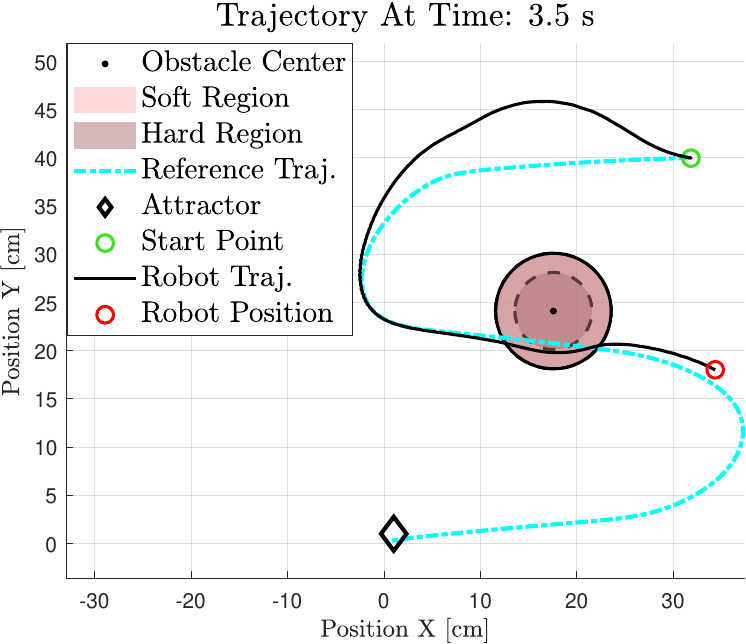}{1.1in}{}{black}}%
    \hspace{\fill} 
    \subfloat[]{\label{fig:exp_robot3}\scalebarimg{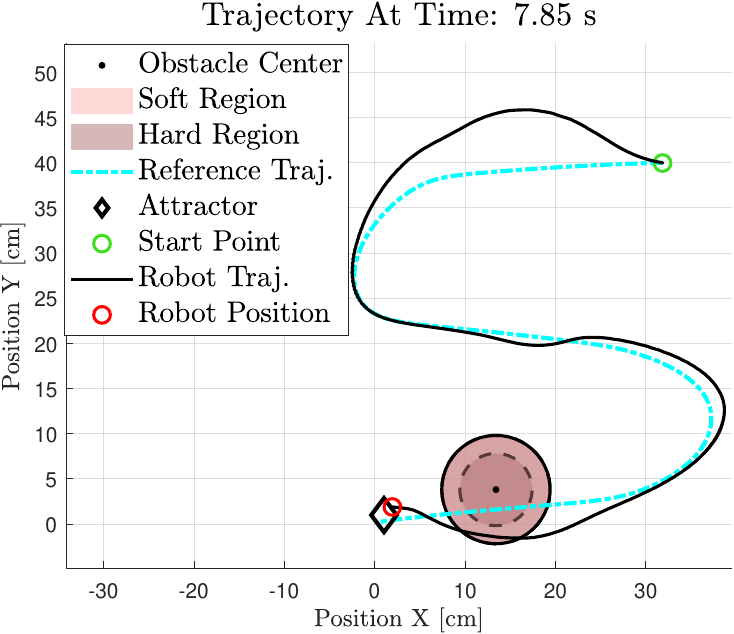}{1.1in}{}{black}}%
    
    \caption{(a)-(c) illustrate the camera captures of three distinct positions of the moving obstacle, as manipulated manually. The perception system detects the obstacle and transmits the obstacle's state.
    (d)-(f) presents the robot trajectories at three discrete time steps, incorporating the corresponding moving obstacle data. These time steps capture the dynamic process as the robot interacts with moving deformable obstacles during its trajectory. 
    }%
    \label{fig:exp_moving}
    \vspace*{\mySepBetweenFigAndCap}%
    \vspace*{0.5em}
\end{figure}

Fig. \ref{fig:exp_moving} presents snapshots from the camera, alongside a comparison among the reference trajectory, the robot's executed trajectory, and the obstacle pose. These visual captures show that the proposed system successfully detects and tracks the obstacle and computes correspondent modified robot velocity in real time using the proposed modulation method. The results demonstrate the effectiveness of our approach in robot interaction with moving deformable obstacles, ensuring smooth and responsive trajectory adaptation.

\section{Conclusion}\label{sec:conc}

In conclusion, this paper presents a novel approach for robot navigation in deformable environments by integrating LfD and DS with a dynamic modulation matrix. Our method enables real-time distinction between soft and hard regions, ensuring adaptive, stable trajectory planning. Validated through simulations and experiments, our approach effectively controls trajectory and velocity in complex environments. Future work will focus on algorithm and controller improvement accounting for both robot and deformable obstacle dynamics to ensure efficiency and safety. 





\balance

\bibliographystyle{ieeetr}
\setlength{\baselineskip}{0pt}
\bibliography{references}

\end{document}